\newtheorem{lemma}{Lemma}
\newtheorem{theorem}{Theorem}
\newtheorem{rem}{Remark}
\newtheorem{assumption}{Assumption}
\DeclareMathOperator{\prox}{\mathbf{prox}}
\title{\LARGE \bf Decentralized Statistical Inference with Unrolled\\ Graph Neural Networks }
\author{ He Wang, Yifei Shen, Ziyuan Wang, Dongsheng Li, Jun Zhang, Khaled B. Letaief and Jie Lu
\thanks{This work has been supported by the National Natural Science Foundation of China under grant 61603254 and Hong Kong Research Grant Council under Grant No. 16210719.}% <-this % stops a space
\thanks{H. Wang, Z. Wang and J. Lu are with the School of Information Science and Technology, ShanghaiTech University, 201210 Shanghai, China. Email: {\tt \{wanghe, wangzy11,lujie\}@shanghaitech.edu.cn}. Y. Shen and K. B. Letaief are with the Department of Electronic and Computer Engineering, The Hong Kong University of Science and Technology, Hong Kong, China. Email: {\tt \{yshenaw, eekhaled\}@ust.hk}. D. Li is with Microsoft Research Asia, Shanghai, China. Email: {\tt dongshengli@fudan.edu.cn}. J. Zhang is with the Department of Electronic and Information Engineering, The Hong Kong Polytechnic University, Hong Kong, China. Email: {\tt jun-eie.zhang@polyu.edu.hk}.}
}
\begin{document}

\maketitle
\thispagestyle{empty}
\pagestyle{empty}
	
	\begin{abstract}
In this paper, we investigate the decentralized statistical inference problem, where a network of agents cooperatively recover a (structured) vector from private noisy samples without centralized coordination. Existing optimization-based algorithms suffer from issues of model mismatch and poor convergence speed, and thus their performance would be degraded, provided that the number of communication rounds is limited. This motivates us to propose a learning-based framework, which unrolls well-noted decentralized optimization algorithms (e.g., Prox-DGD and PG-EXTRA) into graph neural networks (GNNs). By minimizing the recovery error via end-to-end training, this learning-based framework resolves the model mismatch issue. Our convergence analysis (with PG-EXTRA as the base algorithm) reveals that the learned model parameters may accelerate the convergence and reduce the recovery error to a large extent. The simulation results demonstrate that the proposed GNN-based learning methods prominently outperform several state-of-the-art optimization-based algorithms in convergence speed and recovery error.
	\end{abstract}
\begin{keywords}
Decentralized optimization, graph neural networks, algorithm unrolling, interpretable deep learning.
\end{keywords}	
	
	% =============================================================================================================================
	%
	% Introduction
	%
	% =============================================================================================================================
	\section{Introduction}\label{sec:introduction}

We consider the decentralized statistical inference problem that a network of agents collaboratively recover a (structured) vector $x^*$ from individual noisy measurements $y_i = f_i(x^*) + \epsilon_i$, where $f_i$ is a sampling function and $\epsilon_i$ is the measurement noise at agent $i$. Such a problem has a considerable number of applications in decentralized machine learning \cite{he2018cola,talwar2015nearly} and decentralized communication systems \cite{zaib2016distributed,guo2020sparse}, and it has to be solved in a real-time manner given strict latency requirements and limited communication budgets.

This statistical inference problem is often formulated as composite programming, and a substantial number of decentralized optimization algorithms have been proposed to solve it \cite{shi2015pgextra,zeng2017fast,zeng2018proxdgd,li2019nids,xu2020abc,wu2020iplux}. While a linear or sublinear convergence rate has been rigorously established in these works, directly applying these algorithms suffers from two main drawbacks. First, there is a model mismatch between recovering the ground-truth vector and optimizing the composite programming, as the latter is often based on convex relaxation. Although some studies demonstrate the near-optimal performance of these formulations \cite{chandrasekaran2012convex}, the analysis is often built upon careful choices of parameters and some strict random pattern of $f_i$, which are difficult to satisfy in practice and thus lead to the unsatisfactory performance \cite{candes2008enhancing}. The second issue is the slow convergence speed of the existing algorithms. To converge, these algorithms often require hundreds of iterations, even with fine-tuned step-sizes. 

To address the above issues, a recent line of research applies data-driven approaches to learn the optimal recovery $x^*$ in an end-to-end manner \cite{gama2020graph,shen2020graph,tolstaya2021learning} via graph neural networks (GNNs). GNNs are neural networks built on distributed message passing \cite{xu2018powerful,sato2019approximation,loukas2019graph}. In each layer of a GNN, every node first aggregates features from its neighbors, and combines them with its own feature, which then undergoes a nonlinear component. On the other hand, in each iteration of a decentralized algorithm, each agent utilizes a \textit{weighted sum} aggregation function to mix its neighbors' information with its own decision variable, followed by other optimization components guiding the descent direction. Therefore, a $K$-layer GNN can naturally mimic a decentralized algorithm with $K$ communication rounds. Given a fixed number of communication rounds, we expect GNNs to learn an optimal update towards the ground-truth vector $x^*$ by adopting the recovery error as the loss function \cite{gama2020graph,loukas2019graph}. 

Despite the superior performance of GNNs in some specific settings, the convergence generally cannot be  guaranteed and their performance deteriorates when the dimension of the decision variable on each agent is larger than $10$ \cite{shen2020graph}. The key to improving the performance and scalability of GNNs is to introduce problem-specific inductive biases into the neural network architecture \cite{shrivastava2019glad,xu2019can,xu2020neural}. For example, when fulfilling the task of finding the maximal degree of a graph \cite{xu2020neural}, the GNN with the \textit{max} aggregation function is guaranteed to perform better than that with the \textit{sum} aggregation function. 

In this paper, we propose an unrolled GNN-based framework for real-time near-optimal decentralized statistical inference. In contrast to previous GNN-based decentralized algorithms, we use the unrolled first-order decentralized algorithm (e.g., PG-EXTRA) as the inductive bias for GNNs. The regularization and step-size are parameterized in an iteration-wise fashion, allowing them to be learned by minimizing the recovery error. In this way, the proposed framework not only enjoys the advantages of data-driven methods, but also inherits the performance guarantees of classic first-order decentralized algorithms. We further develop our proposed framework with decentralized LASSO and provide the convergence analysis with respect to the recovery error for PG-EXTRA.
Simulations demonstrate the fast convergence speed and superior performance in terms of recovery error and robustness to noises. In addition, thanks to the strong inductive bias, the proposed learning-based framework enjoys favorable performance with only tens of training samples.

	The outline of the paper is as follows: Section \ref{sec:sysmodelandmotivations} describes the formulation of the decentralized statistical inference problem. Section \ref{sec:relation and unrolling}  illustrates the similarity between decentralized algorithms and GNN architectures, and describes the unrolling guidelines. Section \ref{sec:algdevelop} 
	proposes the learning-based methods for the sparse vector recovery problem and shows the convergence analysis with respect to the recovery error for PG-EXTRA. Section \ref{sec:experiments} exhibits the simulation results. Section \ref{sec:conclusion} concludes the paper. The codes to reproduce the simulation results are available on https://github.com/IrisWangHe/Learning-based-DOP-Framework.
	
	\subsection*{Notation}  For any vector $x\in\mathbb{R}^n$, we use $\|x\|_0$, $\|x\|_1$, $\|x\|_2$  to denote $\ell_0$-norm, $\ell_1$-norm and  the Euclidean norm, respectively. We let $I$, $\mathbf{O}$, $\mathbf{1}$ and $\mathbf{0}$ represent the identity matrix, the all-zero matrix, the all-one vector and the all-zero vector of proper dimensions. For any matrix $A\in\mathbb{R}^{m\times n}$, $\|A\|_2$ and $\|A\|_F$ are its spectral norm and Frobenius norm respectively. In addition, $\text{null}\{A\}$ is the null space of $A\in\mathbb{R}^{m\times n}$ and $\text{span}\{x\}$ is the linear span of vector $x\in\mathbb{R}^n$. For any symmetric matrices $W,\tilde{W}\in\mathbb{R}^{n\times n}$, $W \succeq \tilde{W}$ means that $W-\tilde{W}$ is positive semidefinite. If $W$ is positive semidefinite, then $\lambda_{\max}(W)$ and $\lambda_{\min}(W)$ represent its largest and smallest eigenvalues respectively, and   $\|\mathbf{x}\|_W=\sqrt{\mathbf{x}^TW\mathbf{x}}$ represents the weighted norm for any $\mathbf{x}\in \mathbb{R}^{n\times d}$.
    Given vectors $x_1,\ldots,x_N\in\mathbb{R}^d$ and functions $f_1,\ldots,f_N:\mathbb{R}^d\to\mathbb{R}$, we let $\mathbf{x}=(x_1,\ldots,x_N)^T\in\mathbb{R}^{N\times d}$ represent the matrix obtained by compacting $x_1,\ldots,x_N$ and  $\mathbf{f}(\mathbf{x}) = \sum_{i=1}^N f_i(x_i)$, and then use  $\tilde{\nabla} \mathbf{f}(\mathbf{x}) = (\tilde{\nabla} f_1(x_1),\ldots,\tilde{\nabla} f_N(x_N))^T$ to denote
	a subgradient of $\mathbf{f}$ at  $\mathbf{x}$. Further, if all the $N$ functions are differentiable, then $\nabla \mathbf{f}(\mathbf{x})$ denotes the gradient of $\mathbf{f}$ at $\mathbf{x}$. 
	The proximal mapping of $\mathbf{f}$ is
    $$\prox_{\lambda \mathbf{f}}(\mathbf{x})=\underset{\mathbf{y} \in \mathbb{R}^{N\times d}}{\arg \min }~ \lambda \mathbf{f}(\mathbf{y})+\frac{1}{2}\|\mathbf{x}-\mathbf{y}\|_F^{2},$$ where $\lambda \ge 0$ is a scalar. The function $\mathbf{f}$ is proximable, if the proximal mapping of $\mathbf{f}$ has a closed form or can be computed efficiently. We use $I_X$ to denote the indicator function with respect to the set $X$, i.e., $I_X(x) = 0$ if $x\in X$ and $I_X(x)=+\infty$ otherwise. We also let $\text{ReLU}(\mathbf{x}) = \max\{\mathbf{x},\mathbf{O}\}$, where $\mathbf{x}\in\mathbb{R}^{N\times d}$ and $\max\{\mathbf{a},\mathbf{b}\}$ is an element-wise maximum operator between $\mathbf{a}$ and $\mathbf{b}$ of proper dimensions. We use $\mathcal{N}(\mu,\Sigma)$ to represent the multivariate normal distribution or Gaussian distribution, where $\mu$ 
   is the mean vector and $\Sigma$ is the covariance matrix of propose dimensions.
    The symbol $g(x) \leq \mathcal{O}_{\mathbb{P} }(f(x) )$ denotes that with high probability, there exists a constant $c$, such that $g(x) \leq c f(x)$.

	% =============================================================================================================================
	%
	% Problem Formulation
	%
	% =============================================================================================================================

\section{Problem Formulation}\label{sec:sysmodelandmotivations}
Suppose a set $\mathcal{V} = \{1,2,\ldots,N\}$  of agents  and their interactions form a networked multi-agent system, which is modeled as an undirected and connected \textit{communication graph} $\mathcal{G}=(\mathcal{V},\mathcal{E})$, where $\mathcal{E} \subseteq \{\{i,j\}:i,j\in\mathcal{V},i\ne j\}$ represents the set of communication links. Agent $i\in\mathcal{V}$ can only communicate   with its (one-hop) neighbors, denoted by $\mathcal{N}_i=\{j\in\mathcal{V}:\{i,j\}\in\mathcal{E}\}$. 

The goal of all the agents is to  cooperatively recover a (structured) vector $x^* \in \mathbb{R}^d$ from their noisy measurements. Each agent $i\in\mathcal{V}$ is associated with its own estimate vector $x_i\in\mathbb{R}^d$ of $x^*$ and its local measurement $y_i \in\mathbb{R}^{m_i}$, in the form of
\begin{equation*}
    y_i = f_i(x^*) + \epsilon_i,
\end{equation*}
where $f_i:\mathbb{R}^d\to\mathbb{R}^{m_i}$ is the measurement or sampling function of agent $i$, and $\epsilon_i\in \mathbb{R}^{m_i}$ is the measurement noise by agent $i$. This setting is prevalent in communication and machine learning. For example, in a distributed communication system, $x^*$ denotes the transmit symbol, $f_i$, $y_i$, $\epsilon_i$ denote the channel propagation, the received symbol, and the addictive noise of the channel at the $i$-th receiver, respectively \cite{zaib2016distributed,guo2020sparse}. In distributed private learning, $x^*$ is the optimal model weight, $f_i$ is the data-generating process, $y_i$ is the label and $\epsilon_i$ is the sampling noise in data \cite{he2018cola,talwar2015nearly}. 

Due to the limited number of measurements and the presence of noise, it is impossible for each agent to recover $x^*$ on its own. In the decentralized setting, each agent only maintains its own measurements and the recovering objective function privately. Unlike the centralized statistical inference approaches, there is no data fusion center that collects data or measurements from agents.  To recover $x^*$ in a coordinated way, each agent $i\in\mathcal{V}$ is allowed to communicate with its neighbors, such as its latest local estimate.

A typical approach is to formulate the recovery problem as the following composite programming problem:
\begin{equation}\label{prob:composite}
\begin{aligned}
&\underset{x_i \in \mathbb{R}^d\ \forall i\in\mathcal{V}}{\text{minimize}} & & \sum_{i=1}^N \left(s_i\left(x_i \right) + \lambda r_i(x_i)\right)\\
&\text{subject to}& & x_1 = \cdots = x_N,
\end{aligned} 
\end{equation}
where each $s_i(x_i) = h_i(f_i(x_i); y_i)$ is a function measuring the distance between $f_i(x_i)$ and $y_i$, each $r_i(x_i)$ is a regularizer revealing the underlying structure of the optimal solution $x^*$ (i.e., the vector to be recovered) and $\lambda$ is a nonnegative scalar. Suppose that each $s_i:\mathbb{R}^{d}\to \mathbb{R}$ is a convex differentiable function whose gradient is Lipschitz continuous, and each $r_i: \mathbb{R}^d\to \mathbb{R}\cup \{+\infty\}$ is a convex and proximable but possibly nondifferentiable function. Note that  $r_i$ is allowed to include an indicator function $I_{X_i}$, with respect to any closed convex set $X_i$, so that problem \eqref{prob:composite} can be constrained by $x_i\in X_i$.

Indeed, many real-world engineering problems can be cast into problem \eqref{prob:composite}. In the setting of decentralized compressive sensing, $s_i$ is squared $\ell_2$ distance, and $r_i$ is the $\ell_1$-norm (if $x^*$ is sparse) \cite{tibshirani1996regression} or nuclear norm (if $x^*$ is low-rank) \cite{candes2010power}. In the setting of distributed learning, $s_i$ is often the cross-entropy or support vector machine (SVM) loss, and $r_i$ is the $\ell_1$-norm or Euclidean norm.

%For instance, suppose $y_i = A_ix^* +z_i$, where $A_i\in\mathbb{R}^{m_i\times d}\ \forall i\in\mathcal{V}$ is a linear transformation matrix and $z_i$ is an additive Guassian white noise. We have $\sum_{i=1}^N m_i\ll d$ such that it is  a highly ill-posed system. To recover the sparse signal $x^*$, let $s_i(x_i) = \|A_ix_i - y_i\|_2^2$ and $ r_i(x_i) = \|x_i\|_1$. Then, problem \eqref{prob:composite} is a decentralized $l_1$-regularized least squares problem or LASSO (least-absolute shrinkage and selection operator) for) in a decentralized setting.
\begin{figure*} 
\subfigure[An illustration of Prox-DGD in the compact form.]{
\begin{minipage}[t]{0.48\linewidth}
\includegraphics[width = 0.95\textwidth,trim=100 100 100 100,clip]{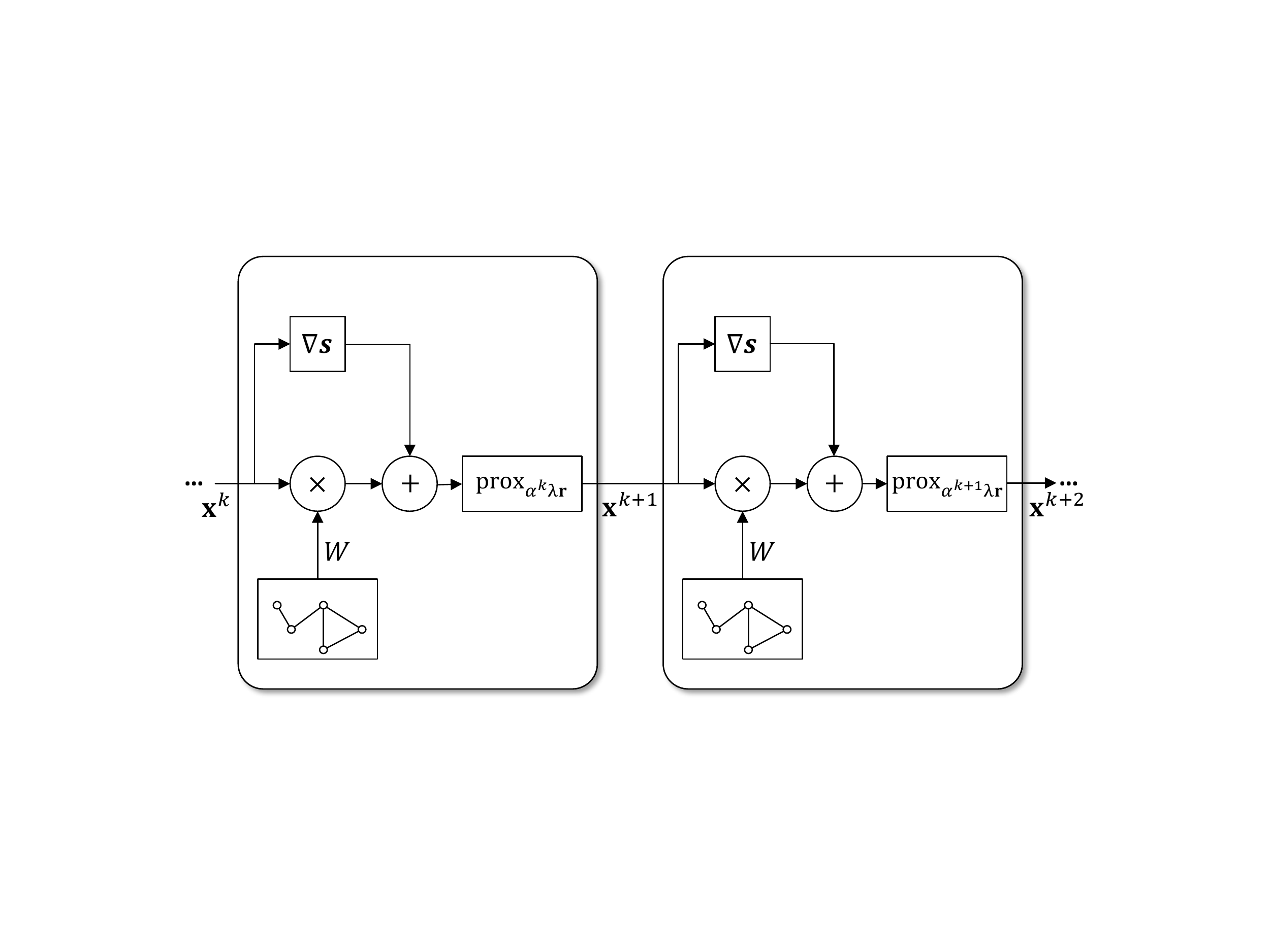}
\label{fig:dop}
\end{minipage}
}
\subfigure[An illustration of GCN in the compact form.]{
\begin{minipage}[t]{0.48\linewidth}
\centering
\includegraphics[width = 0.95\textwidth,trim=100 100 100 100,clip]{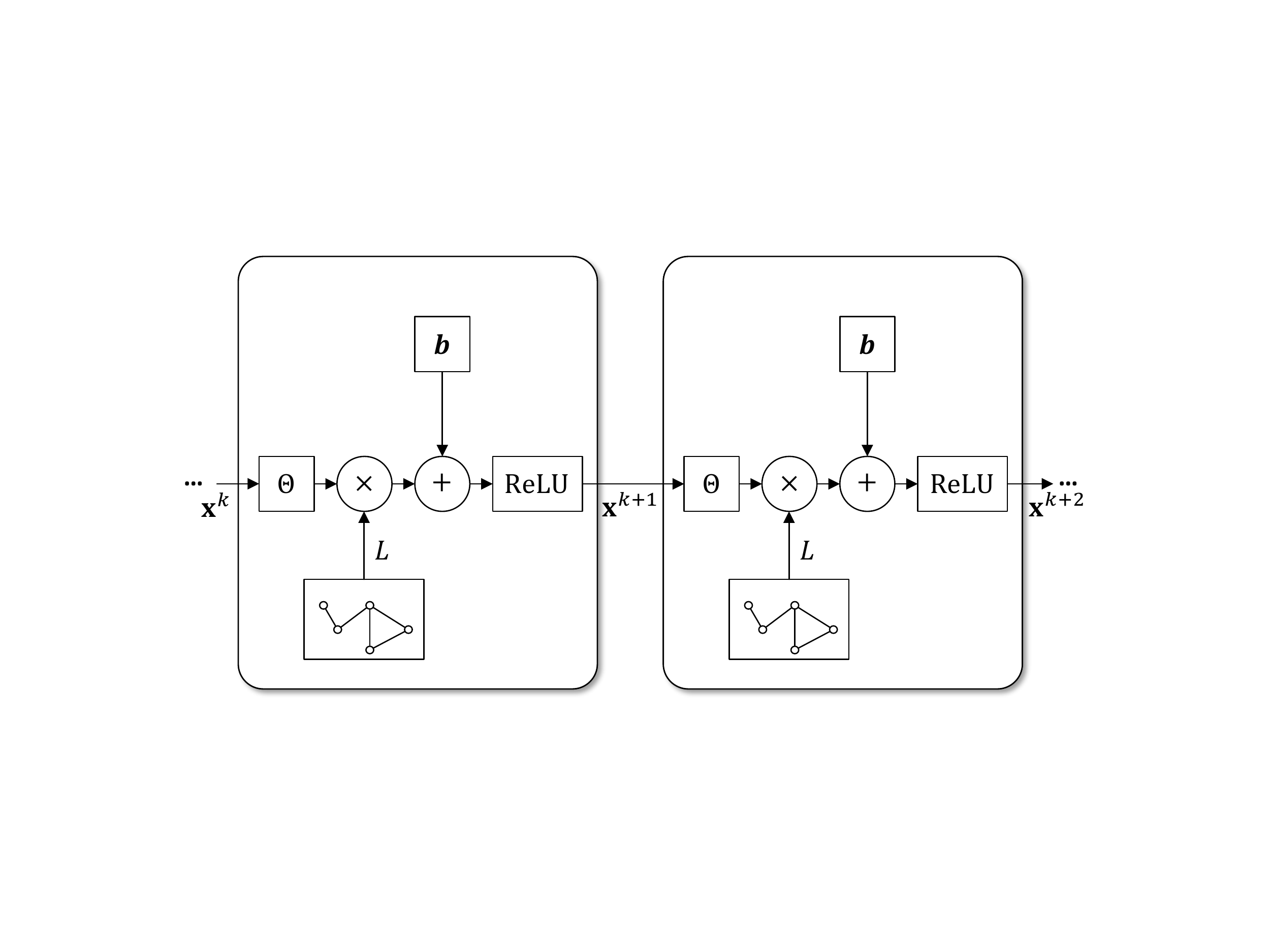}
\label{fig:gcn}
\end{minipage}
}
\caption{The similarities between Prox-DGD \cite{zeng2018proxdgd} and the GCN  \cite{kipf2016semi}. Interestingly, ReLU is the proximal mapping of $I_{\{\mathbf{x}\in\mathbb{R}^{N\times d}|\mathbf{x} \geq \mathbf{O}\}}(\mathbf{x})$.
}
\label{fig:dopgnn}
\end{figure*}

In the following analysis, we denote $\mathbf{x} = (x_1,x_2,\ldots,x_N)^T\in\mathbb{R}^{N\times d}$ and
$S = \{\mathbf{x}\in\mathbb{R}^{N\times d}: x_1 = x_2 = \cdots = x_N\}$. Then,
we  rewrite problem \eqref{prob:composite} as the following compact form:
\begin{equation}\label{prob:compact}
\begin{aligned}
&\underset{\mathbf{x} \in \mathbb{R}^{N\times d}}{\text{minimize}} & & \mathbf{s}(\mathbf{x}) + \lambda\mathbf{r}(\mathbf{x})\\
&\text{subject to}& & \mathbf{x}\in S,
\end{aligned} 
\end{equation}
where $\mathbf{s}(\mathbf{x}) = \sum_{i=1}^N s_i(x_i)$ and  $\mathbf{r}(\mathbf{x}) = \sum_{i=1}^N r_i(x_i)$.

\section{Unrolling Decentralized Optimization Algorithms into Graph Neural Networks }\label{sec:relation and unrolling}
%1. Introduce the distributed opt problems, detailed formulation\\
%2. point out its convex problem and mention the existing algorithms used later.\\
%3. Claim the limitation of existing alg\\
%4. Our methods....

In this section, we introduce classic decentralized optimization algorithms to solve \eqref{prob:compact}  in Section \ref{subsec:decentralized}. We then illustrate the message passing schemes of GNNs and their relationships to decentralized optimization algorithms in Section \ref{subsec:gnn}, and describe the general guidelines of unrolling approaches in Section \ref{subsec:unrolling}.

%. The underlying relationship between decentralized optimization algorithms and GNNs is revealed  together with the general guidelines of unrolling approaches.

\subsection{Decentralized Optimization Algorithms}\label{subsec:decentralized}

%We suppose that a set $\mathcal{V} = \{1,2,\ldots,N\}$ of agents  and their interactions  form a networked multi-agent system, which can be modeled as an undirected and connected \textit{communication graph} $\mathcal{G}=(\mathcal{V},\mathcal{E})$, where $\mathcal{E} \subseteq \{\{i,j\}:i,j\in\mathcal{V},i\ne j\}$ represents the set of channels. Each agent $i\in\mathcal{V}$ can communicate directly  with its (one-hop) neighbors, i.e., $\mathcal{N}_i=\{j\in\mathcal{V}:\{i,j\}\in\mathcal{E}\}$. Assume that each agent $i$ is endowed with a local objective function $f_i: \mathbb{R}^d\to\mathbb{R}+\cup\{+\infty\}$ such that 
%\begin{equation*}
%    f_i(x) = s_i(x) + r_i(x),
%\end{equation*}
%where each $s_i:\mathbb{R}^d\to \mathbb{R}$ is a convex and differentiable function whose gradient is Lipschitz continuous, and each $r_i: \mathbb{R}^d\to \mathbb{R}\cup \{+\infty\}$ is a convex and but possibly nondifferentiable function.

%Then, all the $N$ agents cooperatively minimize the sum of their local  objectives in the following composite form:
%\begin{equation}\label{eq_composite}
%   \underset{x\in\mathbb{R}^d}{ \operatorname{minimize}} \sum_{i=1}^N (s_i(x)+r_i(x)).
%\end{equation}
A  number of decentralized optimization algorithms have been proposed to solve composite optimization problems in the form of \eqref{prob:compact}. To handle the nonsmooth term in problem  \eqref{prob:compact}, one typical approach is to incorporate a proximal operation into certain decentralized first-order algorithms. Below, we present a few classic and state-of-the-art examples of the existing algorithms, which will be embedded into our proposed GNN-based learning framework later and further serve as baselines in our experiments.

\textbf{Prox-DGD:} Prox-DGD \cite{zeng2018proxdgd} incorporates the proximal operator with the decentralized gradient descent (DGD) algorithm \cite{nedic2009dgd}, which is able to handle nonconvex, nonsmooth optimization problems. It operates as follows: Starting from an arbitrary initial point $\mathbf{x}^{0}\in\mathbb{R}^{N\times d}$, and the update equations are given by
\begin{align*}
\mathbf{x}^{k+\frac{1}{2}} = & W\mathbf{x}^k - \alpha^k \nabla \mathbf{s}(\mathbf{x}^k), && \forall k\ge 0,\\
\mathbf{x}^{k+1} =& \prox_{\alpha^k\lambda \mathbf{r}}(\mathbf{x}^{k+\frac{1}{2}}), &&\forall k\ge 0,
\end{align*}
where $\alpha^k> 0$ is the step-size at the iteration $k$  and $W=[w_{ij}]\in\mathbb{R}^{N\times N}$ is the mixing matrix satisfying  \cite[Assumption 2]{zeng2018proxdgd}, which enables the decentralized implementation, i.e., $w_{ij} = 0$ if $i\ne j$ and $\{i,j\}\notin \mathcal{E}$, and $w_{ij}>0$ otherwise.

\textbf{PG-EXTRA:} PG-EXTRA~\cite{shi2015pgextra} is a decentralized proximal gradient method, which is an extension of the decentralized exact first-order algorithm (EXTRA) \cite{shi2015extra}. 
It starts from an arbitrary initial point $\mathbf{x}^0\in\mathbb{R}^{N\times d}$ and then the entire network of agents update as follows:
\begin{align*}
\mathbf{x}^{\frac{1}{2}} =& W\mathbf{x}^{0} - \alpha \nabla \mathbf{s}(\mathbf{x}^0), & & \\
\mathbf{x}^{1} =& \prox_{\alpha\lambda\mathbf{r}}(\mathbf{x}^{\frac{1}{2}});&&\\
\mathbf{x}^{k+1+\frac{1}{2}}=& W \mathbf{x}^{k+1}+\mathbf{x}^{k+\frac{1}{2}}-\tilde{W} \mathbf{x}^{k}&& \nonumber\\
&-\alpha\left[\nabla \mathbf{s}(\mathbf{x}^{k+1})-\nabla \mathbf{s}(\mathbf{x}^{k})\right], &&\forall k\ge 0,\\
\mathbf{x}^{k+2} =& \prox_{\alpha\lambda\mathbf{r}}(\mathbf{x}^{k+1+\frac{1}{2}}), &&\forall k\ge 0,
\end{align*}
where $\alpha>0$ is the step-size. In addition, $W = [w_{ij}]\in\mathbb{R}^{N\times N}$ and $\tilde{W} = [\tilde{w}_{ij}]\in\mathbb{R}^{N\times N}$ are mixing matrices satisfying \cite[Assumption 1]{shi2015pgextra}, which will be discussed in Section \ref{sec:convergence_analysis} detailedly. 

%\textbf{NIDS} NIDS~\cite{li2019nids }is closedly related to PG-EXTRA  but using uncoordinated step sizes. Given arbitrary $x_i^{(0)}\in\mathbb{R}^d$ and its own step size $\alpha_i>0$, let $x_{i}^{(\frac{1}{2})}= x_{j}^{(0)}-\alpha_i \nabla s_{i}(x_{i}^{(0)})$ and $x_i^{(1)} = \prox_{\alpha_i\lambda r_i}(x_i^{(\frac{1}{2})})$, for 
%each agent $i\in\mathcal{V}$. Then, each agent $i$ performs 
%\begin{align}
%x_{i}^{(k+1+\frac{1}{2})}=& x_i^{(k+\frac{1}{2})} - x_i^{(k+1)} + \sum_{j=1}^N \tilde{w}_{ij}\left(2x_j^{(k+1)} - x_j^{k} - \right.\nonumber\\
%&\left.-\alpha_j\left[\nabla s_{j}(x_{j}^{(k+1)})-\nabla s_{j}(x_{j}^{(k)})\right]\right),\\
%x_i^{(k+2)} =& \prox_{\alpha_i\lambda r_i}(x_i^{(k+1+\frac{1}{2})}),
%\end{align}
%where $k>0$ and $W = [w_{ij}] \in\mathbb{R}^{N\times N}$ satisfies 

Note that the updates of the above two algorithms can be summarized by the following three procedures at each iteration: (\romannum{1}) Each agent $i$ mixes its decision variable $x_i^k$ with its neighbors' information by particular mixing matrices, when $(x_1^k,\ldots,x_N^k)^T = \mathbf{x}^k$.  (\romannum{2}) Each agent $i$ takes advantage of the  first-order information of $s_i$ (i.e., $\nabla s_i$) to update. (\romannum{3}) Each agent $i$ performs the proximal 
mapping of $r_i$.

We take Prox-DGD as an example to illustrate the above iterative procedure in Figure \ref{fig:dop}, which will be compared with the graph convolutional networks in the next subsection.

%For instance, the Lasso, $l_1$-regularized linear regression, is widely used in distributed sparse channel estimation \cite{channelestimation_Zaib16} and distributed compressed sensing problem \cite{}. 

\subsection{Graph Neural Networks}\label{subsec:gnn}
In classic machine learning tasks (e.g., image processing), data can be typically embedded into the Euclidean space and thus convolutional neural networks show their competent performance. Recently, there is an increasing number of non-Euclidean applications such as point clouds, molecule structures, and complex networks. These data can be naturally modeled as graphs, and efforts have been put forward to develop neural networks applicable to graphs \cite{wu2019comprehensive}.

Like other deep architectures, graph neural networks (GNNs) have a layer-wise architecture. In each layer of a GNN, every node updates its representation by aggregating features from its neighbors. Specifically, suppose that the graph of a GNN is the same as the aforementioned  \textit{communication graph} $\mathcal{G}$ in Section \ref{sec:sysmodelandmotivations}. Then, the update rule of the $k$-th layer at node $i\in\mathcal{V}$ in a GNN \cite{xu2018powerful} is
\begin{align} \label{eq:gnn}
    x_i^{k+1} = \psi^{k}\left(x_i^{k}, \phi^{k}\left( \left\{ x_j^{k} : j \in \mathcal{N}_i \right\}\right) \right),
\end{align}
where $x_i^k$ denotes the feature vector of node $i$, $\phi^k$ is the aggregation function of node $i$ that collects information from the neighboring nodes, and $\psi^k$ is the function that combines the aggregated information with node $i$'s own information. GNNs in the form of \eqref{eq:gnn} can be implemented efficiently with Pytorch Geometric \cite{fey2019fast}.

One of the popular GNNs is the graph convolutional network (GCN) \cite{kipf2016semi}. The update of a GCN can be written as the following compact form:
\begin{align}
    \mathbf{x}^{k+1} = \text{ReLU}\left(L \mathbf{x}^{k}\Theta  + \mathbf{b}\right),
\end{align}
where ReLU is a nonlinear activation function,  $L\in\mathbb{R}^{N\times N}$ is the Laplacian matrix of the graph $\mathcal{G}$ (which also can induce the decentralized implementation like mixing matrices) and $\Theta\in\mathbb{R}^{d\times d}, \mathbf{b}\in\mathbb{R}^{N\times d}$ are the trainable weights.
%where $L = D^{-\frac{1}{2}} A D^{-\frac{1}{2}}$ is the graph Laplacian matrix, $A$ is the adjencency matrix with self-loop, $D$ is the degree matrix of $A$, $\{\Theta, b_i\}$ is a trainable weights.

The similarities between the GCN and Prox-DGD are shown in Fig \ref{fig:dopgnn}. Note that the operation of both Prox-DGD and PG-EXTRA can be written in the general form of a GNN update as in \eqref{eq:gnn}: Let $\phi^k(\cdot)$ be a \textit{weight sum} aggregation function and $\psi^k(\cdot)$ represents other operations like a gradient descent followed by a proximal mapping.

\subsection{Unrolling Decentralized Algorithms into GNNs}\label{subsec:unrolling}
As classic decentralized algorithms such as Prox-DGD and PG-EXTRA can be cast into the form of GNNs (\ref{eq:gnn}), we may implement these algorithms by the deep graph learning toolbox \cite{fey2019fast}. In the classic optimization algorithms, the hyperparameters need to be manually tuned to achieve desirable performance. With the assistance of data-driven methods, the parameters in the problem formulation and the algorithm (e.g., $\alpha^k$, $\lambda$ and $W$ in the updates of Prox-DGD) can be automatically learned by adopting the recovery error as the loss function. Specifically, denoting $\Theta$ as the set of learnable parameters in the GNN and $\mathbf{x}^{k+1}(\Theta, \{f_i, y_i\}_{i\in\mathcal{V}})$ as the output of the unrolled GNN in the $k$-th layer,  a general form of the loss function can be written as 
\begin{align} \label{loss:gen}
    \ell(\Theta) = d\left(\mathbf{x}^{k+1}(\Theta, \{f_i, y_i\}_{i\in\mathcal{V}}),x^*\right),
\end{align}
where $d(\cdot,\cdot)$ is some distance metric. 

The learning-based algorithm consists of two stages. In the training stage, we collect a number of training samples, i.e., the pair of $(\{f_i, y_i\}_{i\in\mathcal{V}}, x^*)$, and optimize the GNN's learnable parameters by (sub)gradient descent with the objective in \eqref{loss:gen}. In the test stage, the trained neural network is applied and we use the \textit{forward} function in \cite{fey2019fast} to run the algorithms. As the GNNs are implemented by distributed message-passing \cite{fey2019fast}, the update of one node only requires its neighbors' information. As a result, it naturally runs in a decentralized manner.

\section{Application to Decentralized LASSO}\label{sec:algdevelop}
Based on the observations and the general principle developed in the last section, we propose a couple of learning-based methods applying to the decentralized LASSO problem as a running example.
\subsection{LASSO and Algorithms}\label{sec:lasso_alg}
In decentralized LASSO, we have
\begin{align*}
    y_i = A_i x^* + \epsilon_i, \quad \|x^*\|_0 \leq p,
\end{align*}
where $A_i \in \mathbb{R}^{m_i \times d}$ and $m_i$ is the number of measurements for agent $i$. We let $m = \sum_{i=1}^N m_i$ denote the number of total measurements  and further suppose $m_i\ll d$ so that $A_i$ is highly ill-conditioned and each agent $i$ fails to recover $x^*$ independently for $i\in\mathcal{V}$.  This problem has a wide range of applications, e.g., the activity detection in distributed MIMO \cite{guo2020sparse} and distributed feature selection \cite{talwar2015nearly}. To cooperatively estimate the sparse vector $x^*$ in a distributed manner, a common formulation is 

%A typical approach is to employ decentralized optimization algorithms to solve the following  problem:
%A typical approach is to employ $l_1$-norm to encourage the sparse structure of the estimated signal and then to formulate the signal recovery problem as a follow:
%\begin{equation}\label{prob:lasso_centralized}
%\hat{x} :=\underset{x \in \mathbb{R}^d}{\arg\min}  \sum_{i=1}^N \left(\frac{1}{2}\|A_ix - y_i\|_2^2 + \lambda \|x\|_1\right),
%\end{equation}
%where $\lambda$ is the nonnegative penalty parameter that controls the tradeoff between the least-square term and the sparsity regularization term. The problem \eqref{prob:lasso_centralized} is known as the least absolute shrinkage and selection operator (LASSO) in statistics literature \cite{tibshirani1996regression}, which can be solved in a centralized manner. 

\begin{equation}\label{prob:lasso}
\begin{aligned}
&\underset{x_i \in \mathbb{R}^d\ \forall i \in\mathcal{V}}{\text{minimize}} & & \sum_{i=1}^N \left(\frac{1}{2}\|A_ix_i - y_i\|_2^2 + \lambda \|x_i\|_1\right)\\
&\text{subject to}& & x_1 = \cdots = x_N,
\end{aligned}
\end{equation}
where $x_i$ is the local estimate or inference for agent $i\in\mathcal{V}$.  The equality constraint of \eqref{prob:lasso} enforces the optimal estimates of the agents to achieve a consensus denoted by $\hat{x}\in\mathbb{R}^d$.
 
To match the formulation of \eqref{prob:compact}, we let $\mathbf{s}(\mathbf{x}) = \sum_{i=1}^N \frac{1}{2}{\|A_ix_i-y_i\|_2^2}$ and $\mathbf{r}(\mathbf{x}) = \sum_{i=1}^N \|x_i\|_1$. Then, we are able to apply the following decentralized optimization algorithms mentioned in Section \ref{subsec:decentralized} to the decentralized LASSO, i.e.,  problem \eqref{prob:lasso}.

\textbf{Prox-DGD for LASSO:}  Each agent $i\in\mathcal{V}$ starts from an arbitrary initial point $x_i^0\in\mathbb{R}^d$, and then for each $ k\ge 0$, updates according to
\begin{align} 
    x_i^{k + \frac{1}{2}} = &\sum_{j = 1}^N w_{ij} x^{k}_j  - \alpha^k A_i^T (A_i x_i^{k} - y_i), \label{eq:dgd_lasso0}\\
     x^{k+1}_i = & \mathcal{S}_{\alpha^k\lambda}(x_i^{k + \frac{1}{2}}), \label{eq:dgd_lasso1}
\end{align}
where $\mathcal{S}_a(x)$ is  the soft-thresholding operator with the threshold $aI$. Specifically, 
$\mathcal{S}_{\alpha^k\lambda}(x) = \text{ReLU}(x - \alpha^k \lambda I) - \text{ReLU}( - x - \alpha^k \lambda I)$, which coincides with the ReLU component of the GCN introduced in Section \ref{subsec:gnn}.

\textbf{PG-EXTRA for LASSO:} Each agent $i\in\mathcal{V}$ chooses an arbitrary initial point $x_i^0\in\mathbb{R}^d$  and then for each $k\ge 0$, updates according to \begin{align}x_i^{\frac{1}{2}} =& \sum_{j=1}^N w_{ij}x_j^0\! -\! \alpha A_i^T(A_ix_i^0-y_i), &&\text{if } k=0,\label{eq:pg_lasso_init0}\\
x_i^{1} =& \mathcal{S}_{\alpha\lambda}(x_i^{ \frac{1}{2}}),& & \text{if } k=0;\label{eq:pg_lasso_init1}\\
    x_i^{k + \frac{1}{2}} = &\sum_{j = 1}^N w_{ij} x^{k}_j + x^{k - \frac{1}{2}}_i \!-\! \sum_{j=1}^N \Tilde{w}_{ij} x^{k-1}_j  \notag\\
    & - \alpha  A_i^T A_i (x_i^{k} - x_i^{k-1}) ,  &&\text{if } k>0, \label{eq:pg_lasso0}\\
    x^{k+1}_i = & \mathcal{S}_{\alpha\lambda}(x_i^{k + \frac{1}{2}}),&&\text{if } k>0. \label{eq:pg_lasso1}
\end{align}

%x_hat noteq x^*
Note that Prox-DGD and PG-EXTRA are able to converge to the optimal consensus  $\hat{x}$ of problem \eqref{prob:lasso}, under proper assumptions on the step-sizes and the  mixing matrices. However, the LASSO estimator $\hat{x}$ is different from the ground-truth vector $x^*$ that we expect to recover. In fact, both of the above algorithms perform poorly on LASSO in terms of convergence speed and recovery accuracy in practice, which motivates us to employ learning-based methods to overcome such drawbacks.

%Insipred , we will discuss how to embed the decentralized algorithms into the architecture of GNNs and how to learn the model paramters automatically.

%Notice that there are two kinds of parameters appearing the the above updates: the step size $\alpha$ and the penalty parameter $\lambda$. In our model, both of these parameters will be learned by GNNs automatically. In the sequel, we will describe the reasons why these parameters are crucial to improve the convergency speed and recovery error.

\subsection{Recovery Error Analysis}\label{sec:convergence_analysis}
A natural question is what should be learned to improve the performance. To answer this question, we present the recovery error analysis for PG-EXTRA as an example to justify the learning of the step-sizes and the regularization parameters. We impose the standard assumptions in the analysis of decentralized algorithms  \cite{shi2015pgextra} and compressive sensing \cite{chandrasekaran2012convex,oymak2013squared}.

\begin{assumption}\label{assump:mixingmatrix}
The mixing matrices $W = [w_{ij}]\in\mathbb{R}^{N\times N}$ and $\tilde{W} = [\tilde{w}_{ij}]\in\mathbb{R}^{N\times N}$ 
are symmetric and satisfy $[w_{ij}],[\tilde{w}_{ij}]>0\  \forall j\in\mathcal{N}_i\cup \{i\}$, and $[w_{ij}]=[\tilde{w}_{ij}] = 0$ otherwise. Also, $\text{null}\{W-\tilde{W}\} = \text{span}\{\mathbf{1}\}$ and $\text{null}\{I-\tilde{W}\} \supseteq \text{span}\{\mathbf{1}\}$. Moreover, $\tilde{W}\succeq \mathbf{O}$ and $\frac{I+W}{2}\succeq \tilde{W}\succeq W$.
\end{assumption}

\begin{assumption}\label{assump:measurematrix} For $i\in\mathcal{V}$, the entries of
$A_i \sim \mathcal{N}(\mathbf{0}, I)$ and the entries of $\epsilon_i \sim \mathcal{N}(\mathbf{0},\sigma_i^2 I)$.
\end{assumption}

Note that Assumption \ref{assump:mixingmatrix} ensures that each agent only interacts with its neighbors and implies that $\lambda_{\max}(\tilde{W}) = 1$. For the following analysis, we first claim some notations as follows: We denote $\mathbf{x}^* = (x^*,\ldots,x^*)^T\in\mathbb{R}^{N\times d} $ and $\hat{\mathbf{x}} = (\hat{x},\ldots,\hat{x})^T\in\mathbb{R}^{N\times d}$. Suppose that there exists $\hat{\mathbf{q}}\in\mathbb{R}^{N\times d}$ satisfying

 $$(\tilde{W}-W)^{\frac{1}{2}}\hat{\mathbf{q}}+\alpha(\nabla\mathbf{s}(\hat{\mathbf{x}}) + \lambda\tilde{\nabla} \mathbf{r}(\hat{\mathbf{x}})) = \mathbf{O}.$$
Then, let $\mathbf{q}^k = \sum_{t=0}^k (\tilde{W}-W)^{\frac{1}{2}}\mathbf{x}^t$, $\hat{\mathbf{z}} = ((\hat{\mathbf{q}})^T, (\hat{\mathbf{x}})^T)^T$, and $\mathbf{z}^k = ((\mathbf{q}^k)^T, (\mathbf{x}^k)^T)^T$.

The following theorem shows that the recovery error at the $k$-th iteration has an upper bound. 

%The recovery error consists of two parts: the error between achieved solution and the optimal solution to \eqref{prob:lasso}, and the error between the optimal solution to \eqref{prob:lasso} and the ground-truth signal $x^*$. 

\begin{theorem} \label{thm:rec}
Suppose that Assumptions \ref{assump:mixingmatrix} and \ref{assump:measurematrix} hold. Let  $L_s = \max_i \lambda_{\max}(A_i^TA_i)$ and $\sigma = \max_{i} \sigma_i$.
If $m > O(p\log(d/p))$ and $\alpha < \frac{2\lambda_{\min}(\tilde{W})}{L_s}$, then
\begin{align*} \label{eq:conv}
    \|\mathbf{x}^{k+1} - \mathbf{x}^*\|_{\tilde{W}}^2  \leq & 2 \|\mathbf{z}^{0} - \hat{\mathbf{z}}\|^2_{G} - 2\sum_{t = 0}^{k}\xi \|\mathbf{z}^{t+1} - \mathbf{z}^{t}\|^2_{G} \nonumber\\
    &+ \mathcal{O}_{\mathbb{P} }\left(\frac{\sigma^2 Np\max\{ \log d,  \lambda^2\}  }{m}     \right),
\end{align*}
where $\xi = 1 - \alpha L_s/(2\lambda_{\min}(\tilde{W}))$ and $G = \left(\begin{array}{cc}
I & \mathbf{O} \\
\mathbf{O} & \tilde{W}
\end{array}\right)$.
\end{theorem}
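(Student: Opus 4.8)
The plan is to split the recovery error into an \emph{optimization error} and a \emph{statistical error} by inserting the LASSO optimum $\hat{\mathbf{x}}$ and applying the elementary inequality $\|a+b\|^2\le 2\|a\|^2+2\|b\|^2$:
$$\|\mathbf{x}^{k+1} - \mathbf{x}^*\|^2_{\tilde{W}} \le 2\|\mathbf{x}^{k+1} - \hat{\mathbf{x}}\|^2_{\tilde{W}} + 2\|\hat{\mathbf{x}} - \mathbf{x}^*\|^2_{\tilde{W}}.$$
This already accounts for the factor $2$ in front of the first two terms of the claimed bound. The first summand measures how fast PG-EXTRA drives the iterate to the consensus optimum $\hat{\mathbf{x}}$ of \eqref{prob:lasso}; the second measures how well $\hat{\mathbf{x}}$ recovers the ground truth $\mathbf{x}^*$.

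For the optimization error I would import the Lyapunov analysis of PG-EXTRA \cite{shi2015pgextra}. Working with the augmented iterate $\mathbf{z}^k = ((\mathbf{q}^k)^T,(\mathbf{x}^k)^T)^T$, the metric $G$, and the fixed-point characterization supplied by $\hat{\mathbf{q}}$, I would establish the one-step descent
$$\|\mathbf{z}^{k+1} - \hat{\mathbf{z}}\|^2_{G} \le \|\mathbf{z}^{k} - \hat{\mathbf{z}}\|^2_{G} - \xi\,\|\mathbf{z}^{k+1} - \mathbf{z}^{k}\|^2_{G},$$
where the convexity of $\mathbf{s}$, the $L_s$-Lipschitz continuity of $\nabla\mathbf{s}$, and the step-size bound $\alpha < 2\lambda_{\min}(\tilde{W})/L_s$ guarantee $\xi = 1 - \alpha L_s/(2\lambda_{\min}(\tilde{W})) > 0$. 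Telescoping from $t=0$ to $k$ gives $\|\mathbf{z}^{k+1} - \hat{\mathbf{z}}\|^2_{G} \le \|\mathbf{z}^{0} - \hat{\mathbf{z}}\|^2_{G} - \sum_{t=0}^{k}\xi\|\mathbf{z}^{t+1}-\mathbf{z}^{t}\|^2_{G}$, and since the diagonal block of $G$ acting on the $\mathbf{x}$-coordinate is exactly $\tilde{W}$, we have $\|\mathbf{x}^{k+1}-\hat{\mathbf{x}}\|^2_{\tilde{W}} \le \|\mathbf{z}^{k+1}-\hat{\mathbf{z}}\|^2_{G}$. Multiplying by $2$ yields precisely the first two terms of the bound.

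For the statistical error I would first observe that both $\hat{\mathbf{x}}$ and $\mathbf{x}^*$ lie in the consensus set $S$, and that Assumption \ref{assump:mixingmatrix} forces $\tilde{W}\mathbf{1}=\mathbf{1}$; hence for any consensus matrix $\mathbf{1}v^T$ one has $\|\mathbf{1}v^T\|^2_{\tilde{W}} = (\mathbf{1}^T\tilde{W}\mathbf{1})\|v\|_2^2 = N\|v\|_2^2$, so $\|\hat{\mathbf{x}}-\mathbf{x}^*\|^2_{\tilde{W}} = N\|\hat{x}-x^*\|_2^2$. At consensus the objective of \eqref{prob:lasso} collapses to the centralized LASSO $\min_x \tfrac12\|Ax-y\|_2^2 + N\lambda\|x\|_1$ with the stacked design $A=[A_1;\cdots;A_N]$ of $m=\sum_i m_i$ i.i.d. Gaussian rows. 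Invoking the sparse-recovery guarantee under Assumption \ref{assump:measurematrix} — whose restricted-eigenvalue/Gaussian-width condition is secured exactly by $m > O(p\log(d/p))$ — bounds $\|\hat{x}-x^*\|_2^2 \le \mathcal{O}_{\mathbb{P}}(\sigma^2 p\max\{\log d,\lambda^2\}/m)$, and multiplying by $2N$ delivers the final term.

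I expect the statistical step to be the crux: tying the consensus LASSO to a centralized instance, verifying that the Gaussian ensemble satisfies the restricted-eigenvalue condition under $m>O(p\log(d/p))$, and tracking the precise dependence on $\lambda$ (the $\max\{\log d,\lambda^2\}$ factor, which reflects the bias–variance trade-off induced by the regularizer) from the cited compressive-sensing results \cite{chandrasekaran2012convex,oymak2013squared}. The optimization step, by contrast, is essentially a transcription of the known PG-EXTRA descent lemma and should be routine once the fixed-point relation for $\hat{\mathbf{q}}$ is used.
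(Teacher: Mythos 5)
Your proposal matches the paper's proof essentially step for step: the same $\|a+b\|^2\le 2\|a\|^2+2\|b\|^2$ split through $\hat{\mathbf{x}}$, the same PG-EXTRA descent lemma in the $G$-metric telescoped over iterations with $\|\mathbf{x}^{k+1}-\hat{\mathbf{x}}\|^2_{\tilde{W}}\le\|\mathbf{z}^{k+1}-\hat{\mathbf{z}}\|^2_G$, and the same appeal to the compressive-sensing results of \cite{chandrasekaran2012convex,oymak2013squared} for the statistical term. The only cosmetic difference is that you pass from $\|\hat{\mathbf{x}}-\mathbf{x}^*\|^2_{\tilde{W}}$ to $N\|\hat{x}-x^*\|_2^2$ via $\tilde{W}\mathbf{1}=\mathbf{1}$, whereas the paper uses $\lambda_{\max}(\tilde{W})=1$ to reach the same quantity.
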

\begin{proof}
See Appendix \ref{proof:rec}.
\end{proof}

Theorem \ref{thm:rec} provides an error upper bound of the recovery error generated by PG-EXTRA at the $k$-th iteration. To improve the performance, it is better to let the optimal $\alpha$ and $\lambda$ vary across iterations and problem settings. Such flexibility leads to a linear increase of hyperparameters, which are very difficult to fine-tune\footnote{We verify the difficulty of tuning parameters in Appendix \ref{sec:difficult_tune}.}. Therefore, adaptive sequences of step-sizes and regularization coefficients can achieve a better error bound compared to the fixed one.

\subsection{Learning-based Algorithms}\label{sec:learning_based_framework}
Following the general principle in Section \ref{subsec:unrolling}, the remaining part is to determine the learnable parameters and the loss function. Motivated by the recovery error analysis in the last subsection, we set $\alpha^k$ and $\lambda^k$ as the learnable parameters, where $k$ is the iteration index. 

Next, we develop the loss function, which aims to ensure that each step has a sufficient descent. To do so, we set the loss function as a weighted sum of the distances between the $k$-th unrolled iteration's output and the ground-truth $\mathbf{x}^*$, for all $k$, i.e.,
 \begin{equation}\label{eq:loss_func}
   \ell(\Theta) = \sum_{k = 1}^K  \gamma^{K-k} \|\mathbf{x}^k(\Theta) - \mathbf{x}^* \|_F^2,
\end{equation}
where $\Theta = \{\lambda_i, \alpha_i\}_{1 \leq i \leq K}$ groups the learnable parameters, $K$ is the total number of unrolled iterations and $\gamma\le 1$ is a discounting factor.

%Then, the graph neural networks induced by unrolled Prox-DGD and PG-EXTRA are shown in Algorithm \ref{alg}.

The learning-based PG-EXTRA is shown in Algorithm \ref{alg}. 

\begin{algorithm}[H]
\caption{Learned PG-EXTRA (LPG-EXTRA)}
 \label{alg}
\begin{algorithmic}[1]
 \STATE Each node $i\in\mathcal{V}$ arbitrarily chooses the initial $x_i^{0}\in\mathbb{R}^d$.
\FOR{$k=0,1,\ldots$}
\STATE $\lambda \gets \lambda_{nn}(k)$.
\STATE $\alpha \gets \alpha_{nn}(k)$.
\STATE Each node $i \in \mathcal{V}$ sends $x_i^k$ to every neighbor $j \in \mathcal{N}_i$.
\STATE Upon receiving $x_j^k$ $\forall j\in \mathcal{N}_i$, each node $i\in \mathcal{V}$ updates  $x_i^{k+\frac{1}{2}}$ and $x_i^{k+1}$  according to \eqref{eq:pg_lasso_init0}-\eqref{eq:pg_lasso1}.
\ENDFOR
\end{algorithmic}
\end{algorithm}
\begin{rem}
The functions $\lambda_{nn}(k)$ and $\alpha_{nn}(k)$ represent the learned parameters outputted by the $k$-th layer of the graph neural network.
\end{rem}
\begin{rem}
The learned Prox-DGD (LProx-DGD) differs from Algorithm \ref{alg} only at the $6$-th line, where the updates of LProx-DGD are given by  \eqref{eq:dgd_lasso0} and  \eqref{eq:dgd_lasso1}.
\end{rem}
\begin{figure*}[htbp]
    \centering
    \subfigure[SNR $=30$ and $m=80$.]
    {
        \includegraphics[width=0.47\columnwidth]{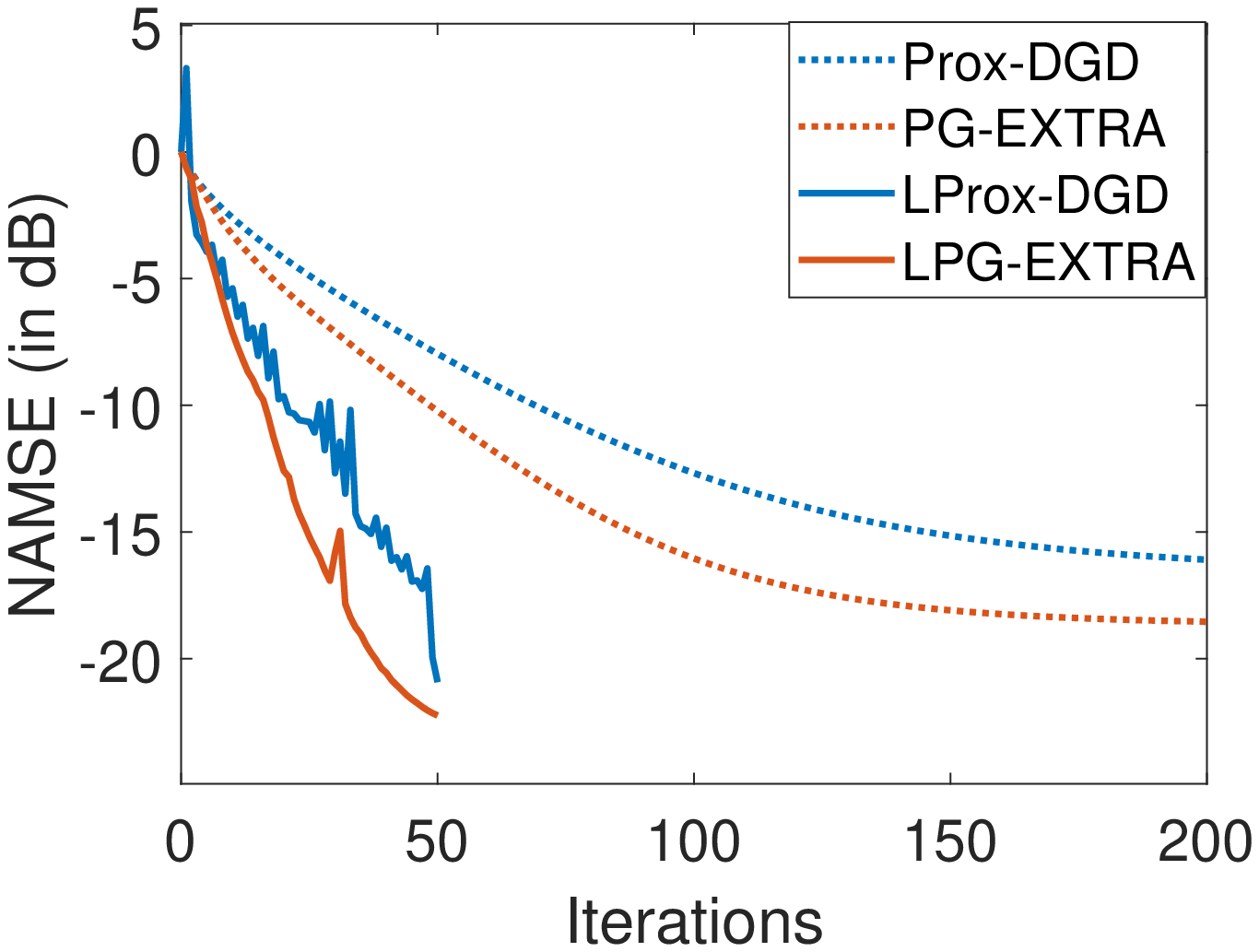}
        \label{fig:m80_con_snr30}
    }
    \subfigure[SNR $= 30$ and $m=300$.]
    {
        \includegraphics[width=0.47\columnwidth]{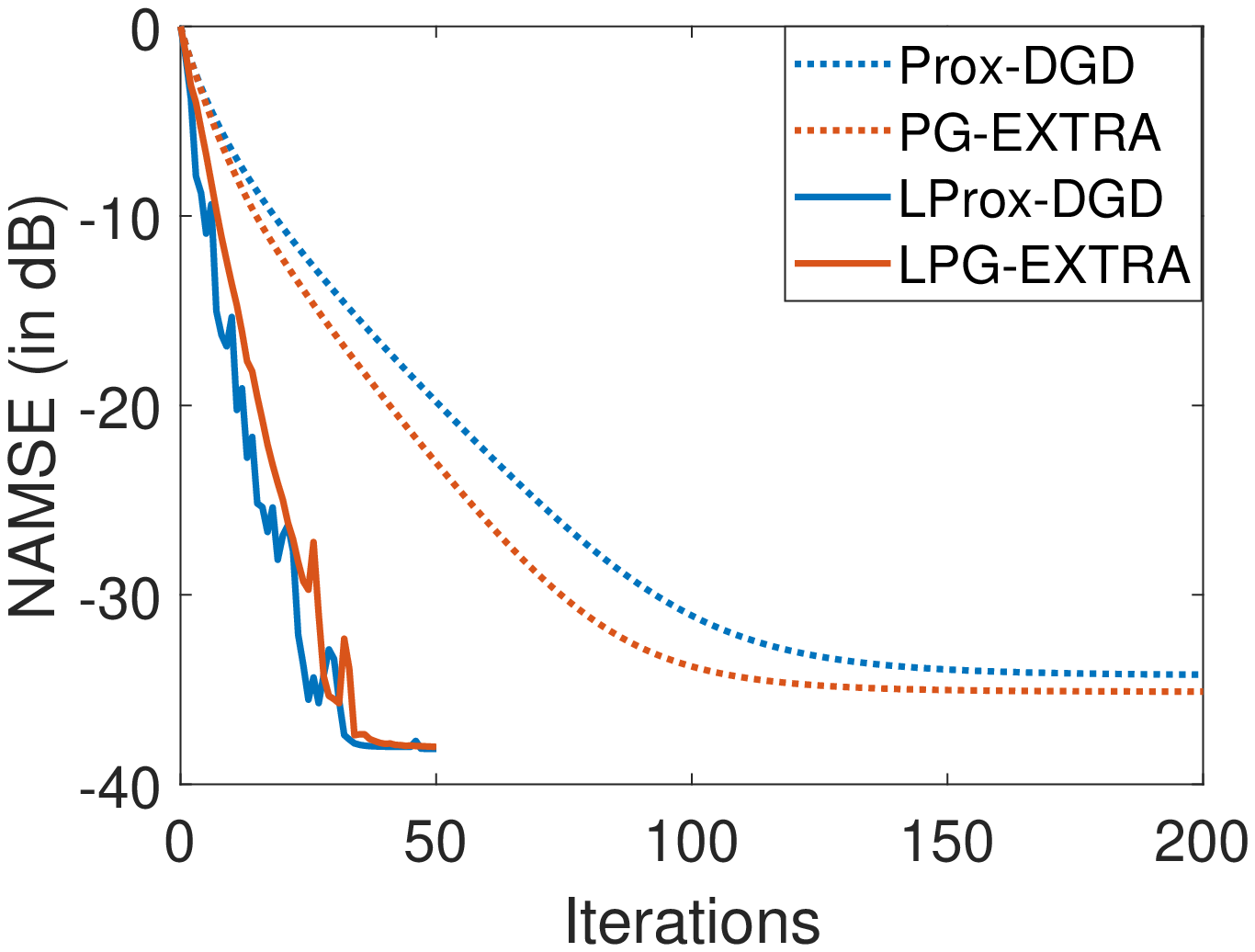}
        \label{fig:m300_con_snr30}
    }
    \subfigure[SNR $=50$ and $m=80$.]
    {
        \includegraphics[width=0.47\columnwidth]{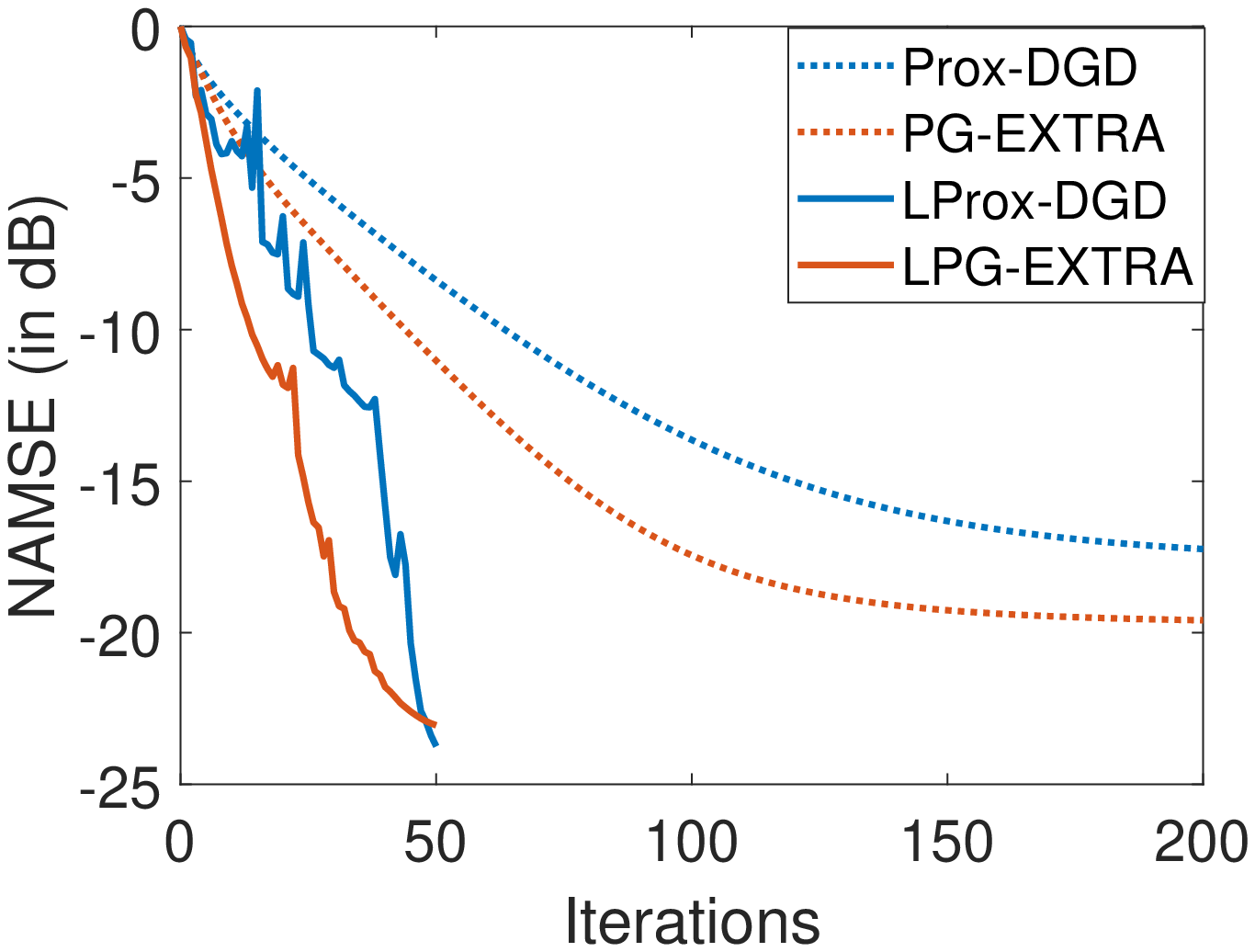}
        \label{fig:m80_con_snr50}
    }
    \subfigure[SNR $=50$ and $m=300$.]
    {
        \includegraphics[width=0.47\columnwidth]{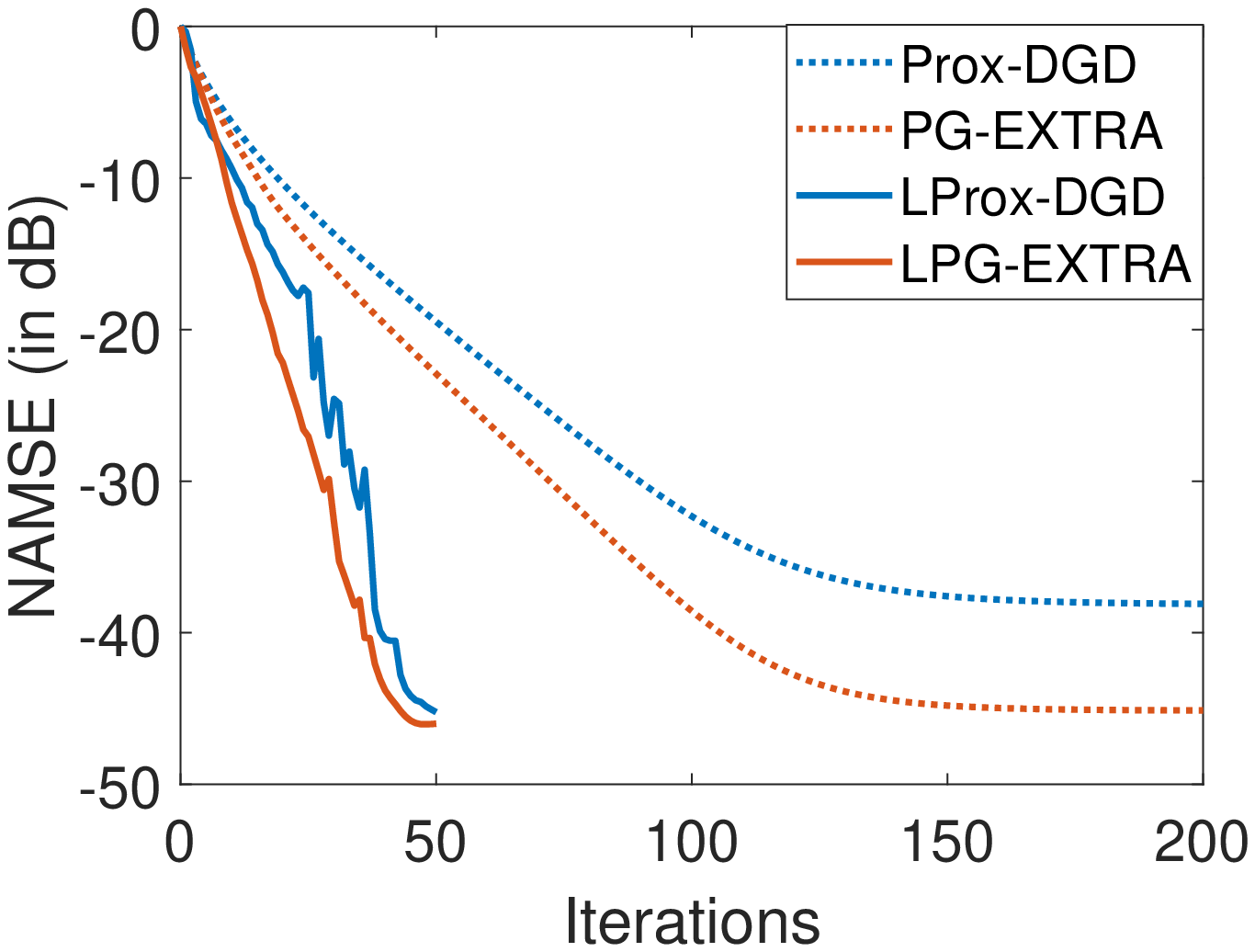}
        \label{fig:m300_con_snr50}
    }
        \caption{Convergence performance of LProx-DGD and LPG-EXTRA compared with the original forms of Prox-DGD and PG-EXTRA with fine-tuned parameters, for SNR $= 30,50$ and $m=80,300$, respectively. }
    \label{fig:convergence}
\end{figure*}

During the training, the ground-truth vector $x^*$ is required to be available for all the agents to learn the parameters, but it is definitely not needed for testing. Guided by minimizing the loss function, agents are enforced to emulate the ground-truth vector within $K$ iterations. In the sequel, we illustrate via a number of experiments that the learned algorithms perform excellently for a family of input measurements and ground-truth vectors.

\begin{figure*}[htbp]
    \centering
    \subfigure[Prox-DGD and LProx-DGD ($m = 80$).]
    {
        \includegraphics[width=0.47\columnwidth]{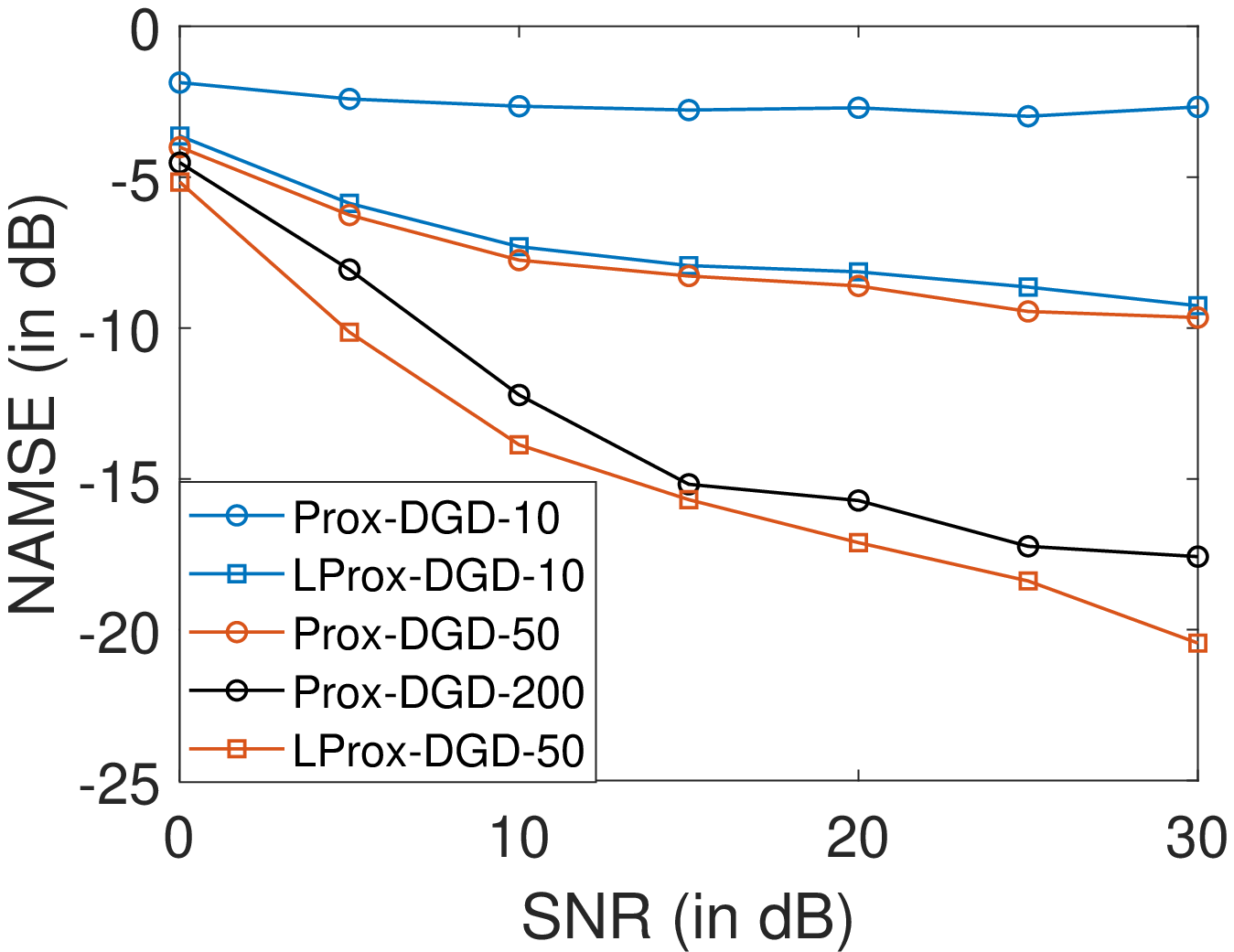}
        \label{fig:m80_snr_dgd}
    }
    \subfigure[Prox-DGD and LProx-DGD ($m = 300$).]
    {
        \includegraphics[width=0.47\columnwidth]{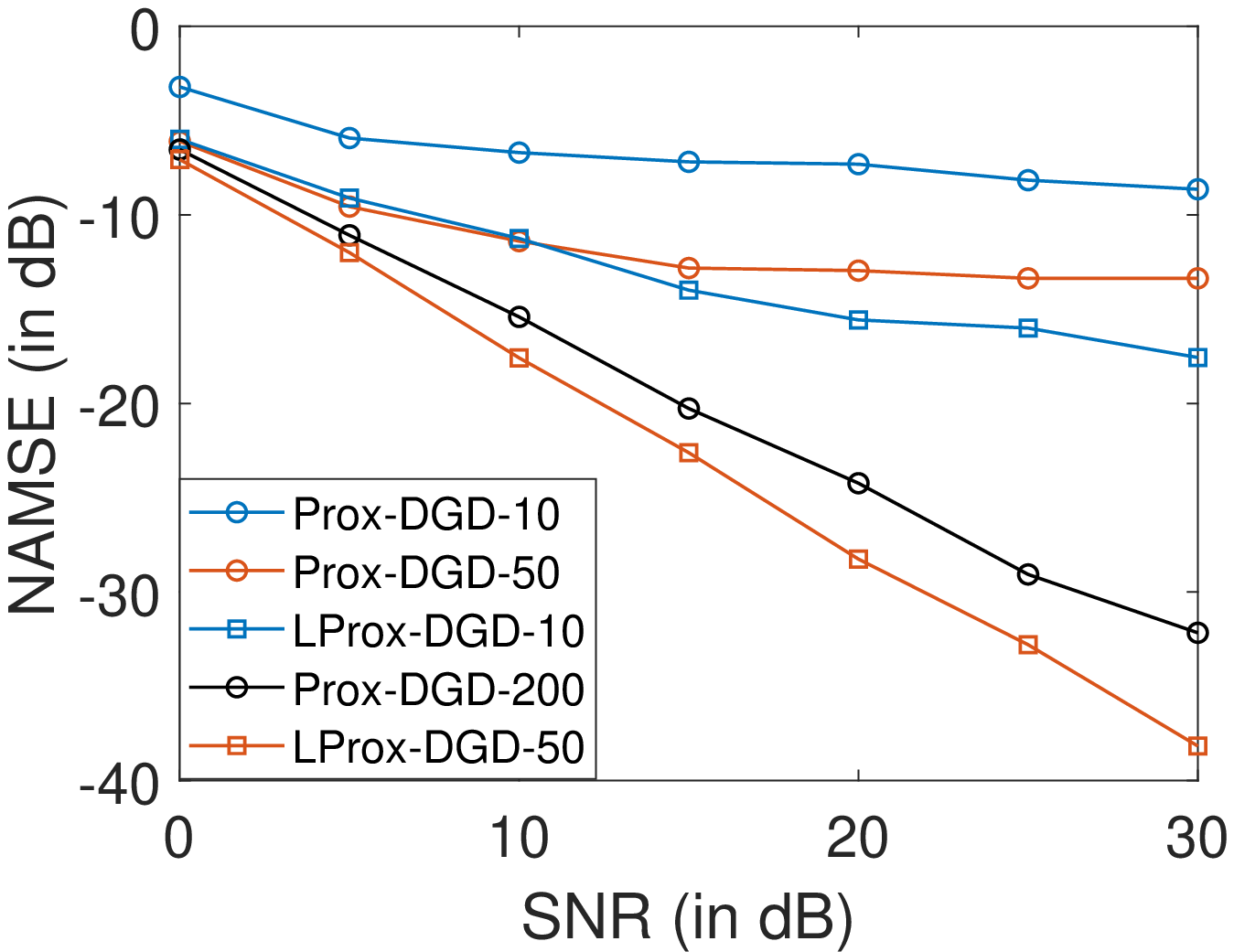}
        \label{fig:m300_snr_dgd}
    }
    \subfigure[PG-EXTRA  and LPG-EXTRA ($m = 80$).]
    {
        \includegraphics[width=0.47\columnwidth]{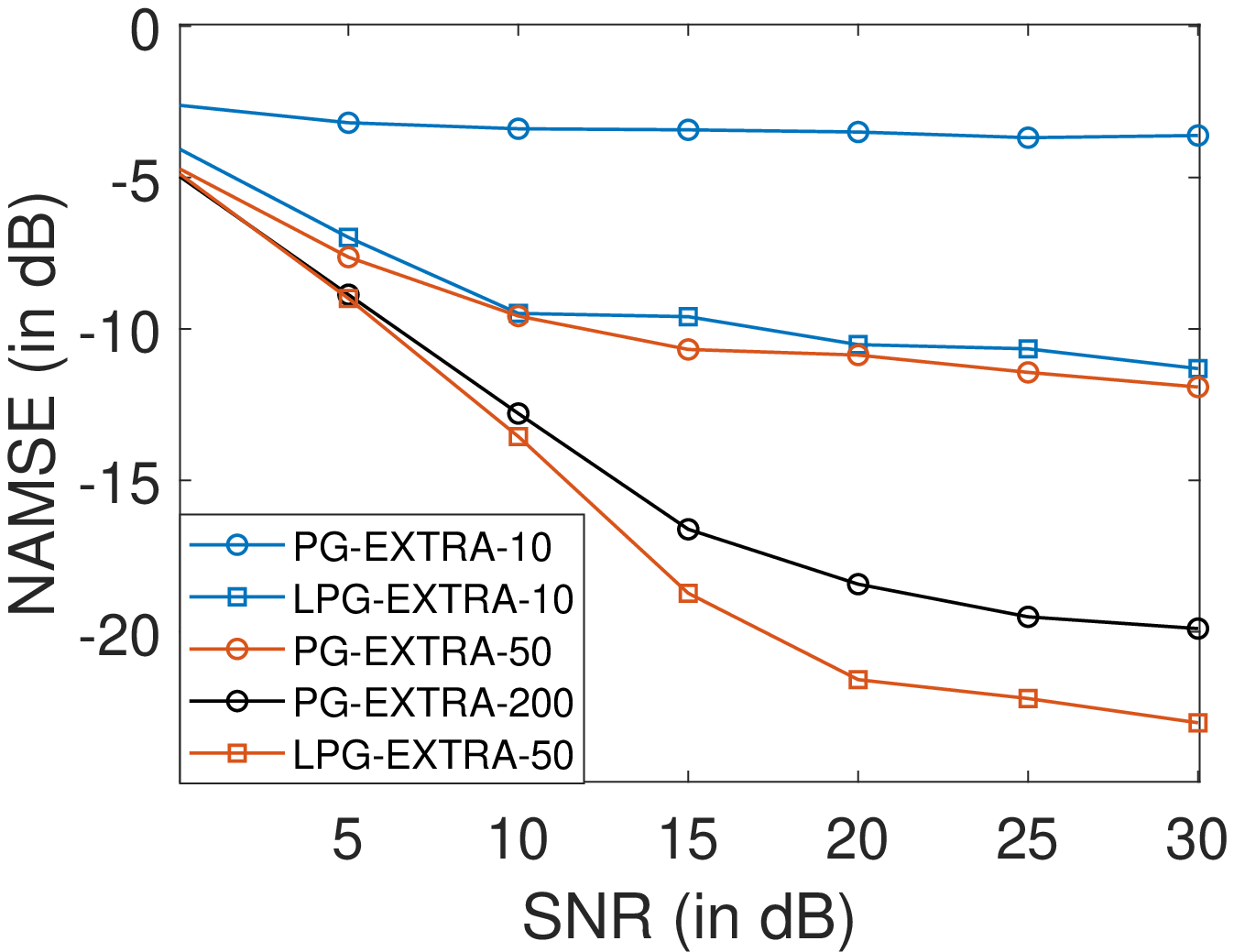}
        \label{fig:m80_snr_pg}
    }
    \subfigure[PG-EXTRA and LPG-EXTRA ($m = 300$).]
    {
        \includegraphics[width=0.47\columnwidth]{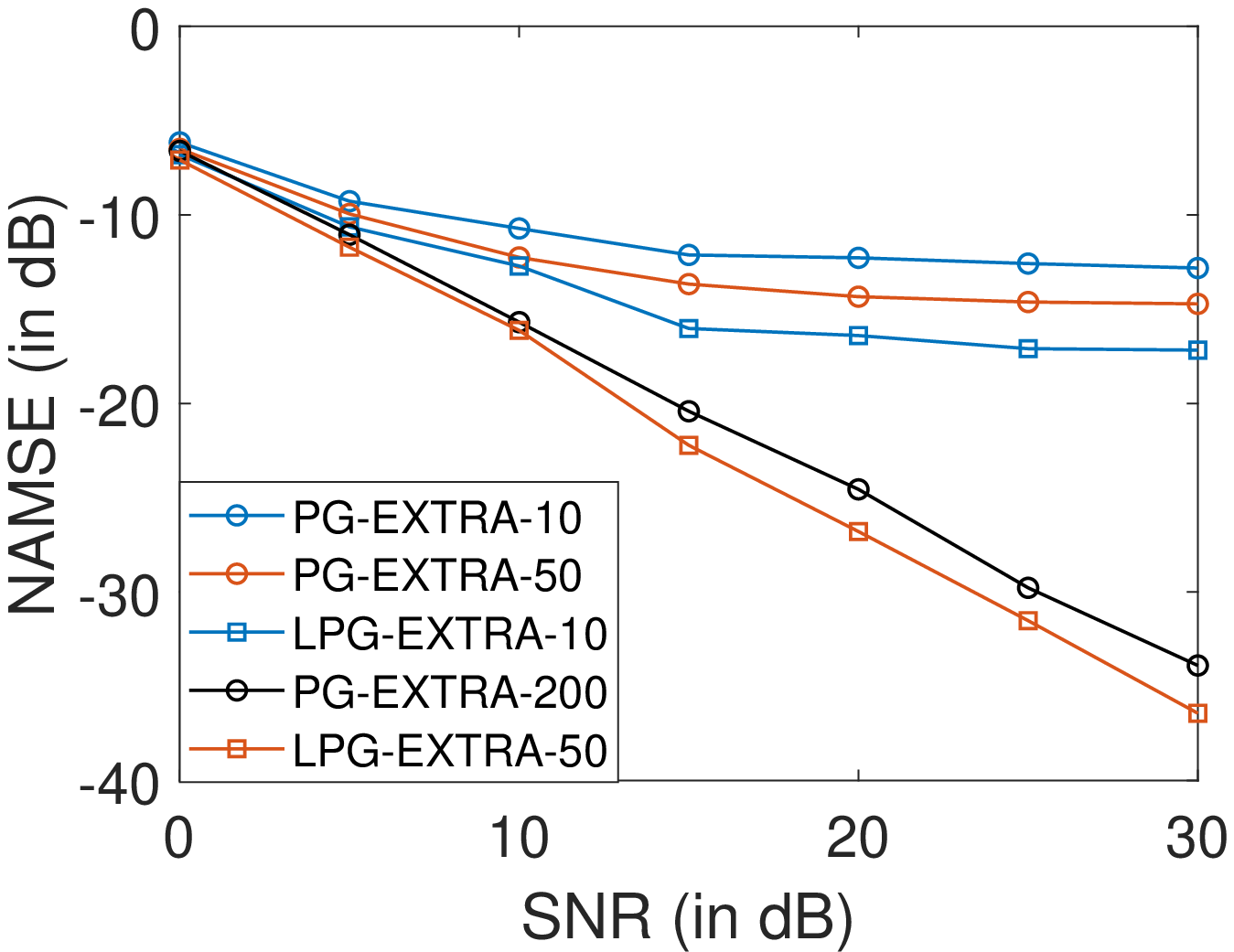}
        \label{fig:m300_snr_pg}
    }
        \caption{Performance of  LProx-DGD and LPG-EXTRA, compared with Prox-DGD, PG-EXTRA with fine-tuned parameters, under different SNRs. The left two figures show the final recovery errors (evaluated by NAMSE) of LProx-DGD and Prox-DGD, while the right two figures present those of  LPG-EXTRA and PG-EXTRA, with different unrolled layers or iterations.}
    \label{fig:SNR}
\end{figure*}
\section{Experiments}\label{sec:experiments}
In this section, we conduct experiments to demonstrate the effectiveness of LProx-DGD and LPG-EXTRA, compared with  Prox-DGD and PG-EXTRA as the baselines, in solving the decentralized sparse vector recovery problem \eqref{prob:lasso}. 

\subsection{Settings}
For the following experiments, the dataset contains 1000 samples for training, 100 samples for validation, and 100 samples for testing. In each sample, the communication graph is generated  randomly by choosing from the set of all graphs with $\lvert \mathcal{V}\rvert = N=5$ nodes and $\lvert \mathcal{E}\rvert = 6$ edges. We let $m\in\{80,300\}$, $m_i = m/N$ and $d_i=100$ and then sample the entries of the sensing matrix $A_i\in \mathbb{R}^{m_i\times 100}\sim\mathcal{N}(\mathbf{0},I)$ and the entries of noise $\epsilon_i\sim \mathcal{N}(\mathbf{0},\sigma^2 I)$ for each agent $i\in\mathcal{V}$, where $\sigma$ is related to the signal-to-noise ratio (SNR) defined as
\begin{align*}
    \text{SNR} = 10 \log_{10}\left(\frac{\mathbb{E}\|x^*\|_2^2 }{\sigma^2} \right).
\end{align*} The sparse ground-truth vector $x^*$ is decided to be nonzero entries following a  Bernoulli-Gaussian distribution with $p_s = m/(2N)$, while the values of nonzero entries also follow the standard normal distribution. 

The recovery error is evaluated by the Normalized Average Mean Square Error (NAMSE) in dB:

\begin{equation}
   \text{NAMSE}(\mathbf{x}^k,\mathbf{x}^*) =10\log_{10}\frac{\mathbb{E}\|\mathbf{x}^k-\mathbf{x}^*\|_F^2} {N \cdot\mathbb{E}\|\mathbf{x}^*\|_F^2},
\end{equation}
for the $k$-th unrolled layer or iteration. We employ NAMSE as the evaluation metric of the following experiments.

We train the learning-based methods for 500 epochs with batch training samples, using the Adam\cite{kingma2015adam} method. For the optimization-based methods, we let the metropolis matrix be $W$ for Prox-DGD and further set $\tilde{W} = \frac{I+W}{2}$ for PG-EXTRA in the following experiments. 

\subsection{Convergence Performance}

We use 50-unrolling-step LProx-DGD and LPG-EXTRA, and compare them to the original forms of Prox-DGD and PG-EXTRA. Since the learning-based methods are data-driven, we fine-tune the fixed parameters in Prox-DGD and PG-EXTRA using validation samples for all the following experiments. However, the process of fine-tuning by grid search methods is tedious and expensive, since the performance is sensitive to the step-size $\alpha$ and the regularization parameter $\lambda$ (cf. Appendix \ref{sec:difficult_tune}).

Figure \ref{fig:convergence}  plots the convergence performance of  Prox-DGD, PG-EXTRA, LProx-DGD and LPG-EXTRA in terms of NAMSE, for different SNRs and the number of measurements. Observe that LProx-DGD and LPG-EXTRA perform similarly, while both of them significantly outperform  Prox-DGD and PG-EXTRA in  convergence speed. When the number of total measurements is sufficient and the noise level is low (cf. Figure \ref{fig:m300_con_snr50}), the learning-based methods  achieve similar level recovery errors with much fewer iterations. When the number of total measurements is insufficient or the noise level is medium (cf. Figures \ref{fig:m80_con_snr30}-\ref{fig:m80_con_snr50}), the learning-based methods can achieve both faster convergence and smaller recovery errors. This is owing to the learning of a tailored and iteration-wise $\lambda$.

%Such performance implies that learning-based methods can effectively reduce the number of iterations to achieve an even better recovery accuracy. Compared to LProx-DGD, LPG-EXTRA 
%is more stable and less sensitive to different SNRs. In Figure \ref{fig:m80_con_snr30} and \ref{fig:m80_snr_pg}, the number of total samples $m=80$ is less than  the dimension of the recovery problem $d=100$, which makes impossible to converge to the optimal estimator in a linear rate. However, thanks to the learning-based methods, LPG-EXTRA shows its superior convergence speed in the cases of limited measurement samples.

\subsection{Robustness to Noise Levels}
To show that the promising performance of LProx-DGD and LPG-EXTRA is robust to the various levels of noises, we compare their final NAMSEs at different SNRs for recovery problems with different sizes. For each of the following experiments, we unroll each of the learning-based algorithms with $10$ layers and $50$ layers respectively, and compare them to their original forms for $10$, $50$ and $200$ iterations. The results are shown in Figure \ref{fig:SNR}.

Figures \ref{fig:m80_snr_dgd} and \ref{fig:m300_snr_dgd} plot the final recovery errors of LProx-DGD and Prox-DGD with different unrolled iterations at various SNRs, for $m=80$ and $m=300$ respectively.  Similarly, Figure \ref{fig:m80_snr_pg} and \ref{fig:m300_snr_pg} plot those of LPG-EXTRA and PG-EXTRA. Observe that the final recovery errors of LProx-DGD and LPG-EXTRA with $10$ (resp. $50$) unrolled layers are comparable to those of Prox-DGD and PG-EXTRA with $50$ (resp. $200$) iterations, at different levels of noises in both cases of $m=80$ and $m=300$.

Such appealing performance substantiates that the learned parameters can remarkably reduce the number communication rounds by up to $80\%$ compared to the original forms, while guaranteeing the same level of recovery errors given different noisy measurements.

\subsection{Impact of Training Samples}
Deep learning-based methods often require a large number of training samples, which is infeasible to obtain in practice. In this subsection, we show the impact of sample size  on the performance, under the setting of $m=300$ in the last two subsections. As is revealed in Table \ref{tab:sample}, the trained model performs well with only tens of samples. This is achieved by introducing a strong inductive bias into the model.

\begin{table}[htbp]
	
	%\centering  
	\selectfont  
	\centering
	
	\caption{Performance (NAMSE in dB) for  LProx-DGD and LPG-EXTRA with different numbers of training samples.} 
	
	\resizebox{0.46\textwidth}{!}{
		\begin{tabular}{|c|c|c|c|c|c|c|c|}  
			\hline  
 Training Samples & 1 & 5 & 10 & 20 & 50 & 100 & 1000 \\ \hline
 LProx-DGD-10 & -9.99 & -14.17 & -14.74 & -14.59 & -14.77 & -16.98 & -17.65 \\ \hline
 LPG-EXTRA-10 & -11.74 & -14.04 & -14.87 &    -15.19 & 15.41 & -16.73 & -17.17\\ \hline
 LProx-DGD-50 & -30.98 & -34.51 & -34.75 & -35.39 & -35.87 & -36.22 & -38.17 \\ \hline
 LPG-EXTRA-50 & -32.57 & -34.77 & -35.07 &   -35.71 &  -35.96 & -36.07& -36.43\\ \hline

	\end{tabular}}
	\label{tab:sample}
\end{table}

\section{Conclusion}\label{sec:conclusion}
In this paper, we have developed a learning-based framework for decentralized statistical inference problems. A main ingredient is the adoption of classic first-order decentralized optimization algorithms as the inductive bias for GNNs, which results in both theoretical guarantees and competitive practical performance. Moreover, this work has identified the connection between the GNN architectures and the decentralized proximal methods, which is of independent interest. This investigation may trigger profound implications in both theoretical and practical aspects of decentralized optimization and learning.

\bibliographystyle{IEEEtran}
\bibliography{ref_unabbr}
\appendix
\subsection{Proof for Theorem \ref{thm:rec}}\label{proof:rec}

This proof is based on the convergence analysis of PG-EXTRA \cite{shi2015pgextra} and the MSE analysis of LASSO \cite{oymak2013squared}. Specially, by denoting
 $T := \mathcal{O}_{\mathbb{P} }\left(\frac{\sigma^2 Np\max\{ \log d,  \lambda^2\}  }{m}     \right)$, we have
    \begin{align*}
        &\|\mathbf{x}^{k+1} - \mathbf{x}^*\|^2_{\tilde{W}}  
        \leq 2\|\mathbf{x}^{k+1} - \hat{\mathbf{x}}\|^2_{\tilde{W}} + 2\| \hat{\mathbf{x}} - \mathbf{x}^*\|^2_{\tilde{W}} \\
        \leq & 2\|\mathbf{z}^{k+1} - \hat{\mathbf{z}}\|^2_{G} + 
        2\|\tilde{W}\|_2^2\| \hat{\mathbf{x}} - \mathbf{x}^*\|^2_{F} \\
        \overset{(a)}{\leq}& 2\|\mathbf{z}^{k+1} - \hat{\mathbf{z}}\|^2_{G} + 
         2T \\
        \overset{(b)}{\leq} & 2\left(\|\mathbf{z}^{k} - \hat{\mathbf{z}}\|^2_{G} - \xi \|\mathbf{z}^{k+1} - \mathbf{z}^{k}\|^2_{G}\right) + 2T \\
        \leq & 2 \left( \|\mathbf{z}^{0} - \hat{\mathbf{z}}\|^2_{G} - \sum_{t = 0}^{k}\xi \|\mathbf{z}^{t+1} - \mathbf{z}^{t}\|^2_{G} \right) + 2T,\\
    \end{align*}
    where the inequality (a) is derived from Lemma \ref{lem:pg_extra} and $\lambda_{\max}(\tilde{W}) = 1$ and the inequality (b) is due to Lemma \ref{lem:rec_error}.
    
\begin{lemma}[\cite{shi2015pgextra}]\label{lem:pg_extra}
    Suppose Assumption \ref{assump:mixingmatrix} holds. If $\alpha\in\left(0,\frac{2\lambda_{\min}(\tilde{W})}{L_s}\right)$, then the sequence $\mathbf{z}^k$ generated by PG-EXTRA satisfies 
    \begin{align} \label{eq:seq_extra}
        \|\mathbf{z}^k - \hat{\mathbf{z}}\|^2_G - \|\mathbf{z}^{k+1} - \hat{\mathbf{z}}\|^2_G \geq \xi \|\mathbf{z}^k - \mathbf{z}^{k+1}\|^2_G,
    \end{align}
    where $\xi = 1 - \frac{\alpha L_s}{2\lambda_{\min}(\tilde{W}) }$ and $k\ge 0$.
\end{lemma}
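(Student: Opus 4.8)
The plan is to reproduce the primal--dual argument of \cite{shi2015pgextra}: recast PG-EXTRA as a fixed-point iteration that is firmly nonexpansive in the $G$-metric, with $\hat{\mathbf{z}}$ as its fixed point, and then extract \eqref{eq:seq_extra} by expanding the $G$-distance to $\hat{\mathbf{z}}$. \emph{Step 1 (implicit form).} Write $U \defeq (\tilde{W}-W)^{\frac{1}{2}}$, which is real and symmetric since Assumption~\ref{assump:mixingmatrix} gives $\tilde{W}\succeq W$. The proximal step is equivalent to the subgradient identity $\mathbf{x}^{j+\frac{1}{2}}-\mathbf{x}^{j+1}=\alpha\lambda\tilde{\nabla}\mathbf{r}(\mathbf{x}^{j+1})$. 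Substituting this into the EXTRA recursion and summing from $0$ to $k$, the gradient differences telescope while the mixing terms accumulate into $U\mathbf{q}^{k-1}=\sum_{t=0}^{k-1}(\tilde{W}-W)\mathbf{x}^{t}$, yielding the one-step identity
$$\mathbf{x}^{k+1}-W\mathbf{x}^{k}+U\mathbf{q}^{k-1}+\alpha\nabla\mathbf{s}(\mathbf{x}^{k})+\alpha\lambda\tilde{\nabla}\mathbf{r}(\mathbf{x}^{k+1})=\mathbf{O},$$
together with the dual recursion $\mathbf{q}^{k}-\mathbf{q}^{k-1}=U\mathbf{x}^{k}$. I would then verify that $\hat{\mathbf{z}}$ is the fixed point: setting $\mathbf{x}^{k+1}=\mathbf{x}^{k}=\hat{\mathbf{x}}$ and $\mathbf{q}^{k-1}=\hat{\mathbf{q}}$ and using $(I-W)\hat{\mathbf{x}}=\mathbf{O}$ (which follows from $\hat{\mathbf{x}}\in S$ and $W\mathbf{1}=\tilde{W}\mathbf{1}=\mathbf{1}$, a consequence of $\text{null}\{W-\tilde{W}\}=\text{span}\{\mathbf{1}\}$ and $\text{null}\{I-\tilde{W}\}\supseteq\text{span}\{\mathbf{1}\}$), the identity collapses to exactly the defining relation $U\hat{\mathbf{q}}+\alpha(\nabla\mathbf{s}(\hat{\mathbf{x}})+\lambda\tilde{\nabla}\mathbf{r}(\hat{\mathbf{x}}))=\mathbf{O}$.

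\emph{Step 2 ($G$-distance expansion).} Subtracting the fixed-point relation from the iteration identity and pairing the result with $\mathbf{z}^{k+1}-\hat{\mathbf{z}}$ in the $G$-inner product, I would apply the three-point identity $2\langle\mathbf{z}^{k+1}-\mathbf{z}^{k},\mathbf{z}^{k+1}-\hat{\mathbf{z}}\rangle_{G}=\|\mathbf{z}^{k+1}-\mathbf{z}^{k}\|_{G}^{2}+\|\mathbf{z}^{k+1}-\hat{\mathbf{z}}\|_{G}^{2}-\|\mathbf{z}^{k}-\hat{\mathbf{z}}\|_{G}^{2}$, so that the left-hand side of \eqref{eq:seq_extra} appears along with an extra $\|\mathbf{z}^{k+1}-\mathbf{z}^{k}\|_{G}^{2}$. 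The dual block and the $U\mathbf{q}$ terms cancel after invoking $U\hat{\mathbf{x}}=\mathbf{O}$, leaving two cross terms: the subgradient pairing $\langle\tilde{\nabla}\mathbf{r}(\mathbf{x}^{k+1})-\tilde{\nabla}\mathbf{r}(\hat{\mathbf{x}}),\mathbf{x}^{k+1}-\hat{\mathbf{x}}\rangle\ge0$, which is nonnegative by convexity of $\mathbf{r}$ and hence only helps, and a smooth cross term.

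\emph{Step 3 (controlling the smooth term and the constant).} Because $\nabla\mathbf{s}$ is evaluated at the lagged iterate $\mathbf{x}^{k}$ while the $G$-expansion pairs against $\mathbf{x}^{k+1}$, the smooth contribution is not sign-definite; this mismatch is precisely what prevents $\xi=1$. I would bound it using the $L_s$-Lipschitz continuity of $\nabla\mathbf{s}$ (equivalently cocoercivity) together with Young's inequality, producing a term of order $\alpha L_s\|\mathbf{x}^{k+1}-\mathbf{x}^{k}\|_{F}^{2}$. Converting this Frobenius norm into the $G$-norm via $\|\mathbf{x}^{k+1}-\mathbf{x}^{k}\|_{\tilde{W}}^{2}\ge\lambda_{\min}(\tilde{W})\|\mathbf{x}^{k+1}-\mathbf{x}^{k}\|_{F}^{2}$ is exactly where $\lambda_{\min}(\tilde{W})$ enters, so that the coefficient of $\|\mathbf{z}^{k+1}-\mathbf{z}^{k}\|_{G}^{2}$ becomes $1-\frac{\alpha L_s}{2\lambda_{\min}(\tilde{W})}=\xi$. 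The hypothesis $\alpha\in(0,2\lambda_{\min}(\tilde{W})/L_s)$ then guarantees $\xi>0$, which yields \eqref{eq:seq_extra}.

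The main obstacle is Step~1: deriving the exact implicit identity with the correct index bookkeeping, i.e., telescoping the EXTRA gradient differences and folding the nonsmooth prox subgradients into a single relation whose fixed point under the $G$-metric is genuinely $\hat{\mathbf{z}}$. A secondary delicate point is the routing of cocoercivity and Young's inequality in Step~3 so that the smooth cross term yields precisely the constant $\frac{\alpha L_s}{2\lambda_{\min}(\tilde{W})}$ rather than a looser bound.
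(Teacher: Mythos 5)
Your outline is correct and follows essentially the same route as the source: the paper itself gives no proof of this lemma but imports it verbatim from \cite{shi2015pgextra}, and your three steps (the implicit primal--dual identity with $\mathbf{q}^k=\sum_{t=0}^k(\tilde{W}-W)^{1/2}\mathbf{x}^t$, the three-point expansion in the $G$-metric, and the cocoercivity-plus-Young bound that converts the lagged gradient term into $\tfrac{\alpha L_s}{2\lambda_{\min}(\tilde{W})}\|\mathbf{z}^{k+1}-\mathbf{z}^k\|_G^2$) are precisely the argument in that reference. The identity you derive in Step~1 and the fixed-point verification against the defining relation for $\hat{\mathbf{q}}$ check out, so no gap.
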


\begin{lemma} \label{lem:rec_error}
    Suppose Assumption \ref{assump:measurematrix} holds and $m > O(p \log(d/p))$. Then, there exists $C>0$ such that with probability $1-\exp(-Cm)$, we have
    \begin{align*}
        \|\hat{\mathbf{x}} - \mathbf{x}^*\|^2_F \leq \mathcal{O}_{\mathbb{P} }\left(\frac{\sigma^2 Np\max\{ \log d,  \lambda^2\}  }{m}     \right).
    \end{align*}

\end{lemma}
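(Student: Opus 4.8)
The plan is to reduce the decentralized consensus LASSO to a single centralized LASSO and then invoke Gaussian compressive-sensing machinery. Because the equality constraint forces every row of the minimizer of \eqref{prob:lasso} to equal a common vector $\hat{x}$, the problem collapses to $\hat{x} = \arg\min_{x}\tfrac12\|Ax-y\|_2^2 + N\lambda\|x\|_1$, where $A=[A_1^T,\dots,A_N^T]^T\in\mathbb{R}^{m\times d}$ and $y=[y_1^T,\dots,y_N^T]^T$ are the vertically stacked measurement matrix and observation vector. Under Assumption \ref{assump:measurematrix} the stacked $A$ again has i.i.d.\ $\mathcal{N}(0,1)$ entries and $y=Ax^*+\epsilon$ with $\epsilon$ zero-mean Gaussian whose blockwise variances are bounded by $\sigma^2=\max_i\sigma_i^2$. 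Since $\hat{\mathbf{x}}$ and $\mathbf{x}^*$ stack $N$ identical copies, $\|\hat{\mathbf{x}}-\mathbf{x}^*\|_F^2 = N\|\hat{x}-x^*\|_2^2$, so it suffices to bound the centralized error $\|\hat{x}-x^*\|_2^2$ and multiply by $N$.

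Next I would run the squared-error analysis of the generalized LASSO (the route of \cite{oymak2013squared,chandrasekaran2012convex}). Writing $h=\hat{x}-x^*$, first-order optimality of $\hat{x}$ gives the basic inequality $\tfrac12\|Ah\|_2^2 \le \langle A^T\epsilon, h\rangle + N\lambda(\|x^*\|_1-\|\hat{x}\|_1)$. The noise correlation is controlled by $|\langle A^T\epsilon,h\rangle|\le \|A^T\epsilon\|_\infty\|h\|_1$, and a standard Gaussian concentration plus union bound over the $d$ coordinates yields $\|A^T\epsilon\|_\infty \lesssim \sigma\sqrt{m\log d}$ with probability $1-\exp(-Cm)$; this is the source of the $\log d$ term. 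Decomposability of the $\ell_1$ norm over the support $S$ of $x^*$ (with $|S|\le p$) then confines $h$ to a cone $\{\|h_{S^c}\|_1\le c\|h_S\|_1\}$.

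The quantitative error then follows from a restricted-strong-convexity (restricted-eigenvalue) bound for the stacked Gaussian matrix: for $h$ in this cone, $\|Ah\|_2^2\gtrsim m\|h\|_2^2$ holds uniformly with probability $1-\exp(-Cm)$ precisely once $m>O(p\log(d/p))$, which is where the stated sample threshold enters (via Gordon's escape-through-the-mesh and the Gaussian width $\approx\sqrt{p\log(d/p)}$ of the cone). Equivalently, in the language of \cite{oymak2013squared} the mean-squared error behaves like $\sigma^2\mathbf{D}/(m-\mathbf{D})$, where $\mathbf{D}$ is the expected squared distance of a standard Gaussian to the scaled subdifferential $N\lambda\,\partial\|x^*\|_1$; for the $\ell_1$ regularizer this distance is $\Theta(p(1+\lambda^2))$ on the support and has a negligible off-support tail once $\lambda\gtrsim\sqrt{\log d}$, so $\mathbf{D}\asymp p\max\{\log d,\lambda^2\}$. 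Combining the restricted curvature, the cone condition, and the noise bound gives $\|h\|_2^2 \lesssim \sigma^2 p\max\{\log d,\lambda^2\}/m$, and multiplying by $N$ delivers the claim with the asserted probability.

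The step I expect to be the main obstacle is the sharp computation of the regularizer-dependent statistical dimension $\mathbf{D}$, i.e.\ proving that the Gaussian distance to $N\lambda\,\partial\|x^*\|_1$ concentrates at $\Theta(p\max\{\log d,\lambda^2\})$ and that the off-support tail stays controlled across all relevant $\lambda$. This is exactly what fuses the bias- and variance-dominated regimes into the single $\max\{\log d,\lambda^2\}$ factor and what forces $m>O(p\log(d/p))$ to keep $m-\mathbf{D}$ bounded away from zero; the remaining pieces (the reduction, the basic inequality, and the Gaussian concentration of $\|A^T\epsilon\|_\infty$) are routine. A minor bookkeeping point is the effective regularization $N\lambda$ and the heterogeneous $\sigma_i$, both of which I would absorb into the $\mathcal{O}_{\mathbb{P}}$ constants by using $\sigma=\max_i\sigma_i$.
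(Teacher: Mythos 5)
Your proposal is correct and follows essentially the same route as the paper, whose proof is simply a citation of Proposition 3.10 in \cite{chandrasekaran2012convex} together with Theorem 3.2 and Lemma 11.1 in \cite{oymak2013squared}; your reduction to a single stacked Gaussian LASSO, the restricted-eigenvalue/Gaussian-width argument at $m > O(p\log(d/p))$, and the statistical-dimension computation $\mathbf{D}\asymp p\max\{\log d,\lambda^2\}$ are precisely the content of those cited results. The only caveat is that the absorption of the effective regularizer $N\lambda$ and the heterogeneous $\sigma_i$ into the constants should be stated as relying on $N$ being fixed, but this matches the paper's own level of rigor.
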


\begin{proof}
    A straightforward integration of Proposition 3.10 in \cite{chandrasekaran2012convex}, Theorem 3.2 and Lemma 11.1 in \cite{oymak2013squared}.
\end{proof}

\subsection{The Difficulty in Tuning Parameters}\label{sec:difficult_tune}

Conventionally, the parameters in PG-EXTRA for solving problem \eqref{prob:composite} including the regularization parameter $\lambda$ and the step-size $\alpha$ can be tuned manually, provided that the ground-truth vectors in validation samples are available. We take the sparse vector recovery problem \eqref{prob:lasso} as an example to demonstrate the difficulty in tuning parameters.
%Conventionally, the parameters in the optimization algorithms are tuned by hand. As shown in our analysis, when the number of iterations is limited, the performance is sensitive to the step size $\alpha$ and the regularization parameter $\lambda$. We set the number of agents $n = 5$, the variable dimension $d = 100$, the total number of samples $m = 80$, the noise standard deviation $\sigma = 0.1$, and the ground-truth signal $x^*$ follows a Bernoulli-Gaussian\footnote{} distribution with $\theta = 0.1$. For each $\alpha$ and $\lambda$, we run PG-EXTRA on $100$ realizations. The performance metric is NMSE (in dB), which is defined as $10\log_{10} (\mathbb{E}\|x-x^*\|_2^2/\mathbb{E}\|x^*\|_2^2)$. The performance is shown in Table .

\begin{table}[htbp]
	
	%\centering  
	\selectfont  
	\centering
	
	\caption{Performance of PG-EXTRA after $200$ iterations under different parameters  for solving  problem \eqref{prob:lasso} (NAMSE in dB).} 
	
	\resizebox{0.45\textwidth}{!}{
		\begin{tabular}{|c|c|c|c|c|c|c|}  
			\hline  
 \backslashbox{$\lambda$}{$\alpha$}& 0.001 & 0.003 & 0.004 & 0.005 & 0.006 \\ \hline
 0.05 & -7.61 & -12.71 & -15.29 & -18.14 & 2.15 \\ \hline
 0.1 & -8.28 & -16.59 & -21.54 &   \textbf{-26.44} &  -14.67\\ \hline
 0.3 & -10.60 & -23.07 & -24.50 & -24.78 & -24.84 \\ \hline
 0.5 & -11.98  & -20.37 & -20.67 &  -20.71 & -20.71 \\ \hline
			
	\end{tabular}}
	\label{tab:tune_pgextra}
\end{table}

Table \ref{tab:tune_pgextra} shows that the recovery performance is sensitive to the parameters, in the case where the number of total measurements $m=100$. For example, the best NAMSE  -26.44 is achieved when $\alpha=0.005$ and $\lambda=0.1$. Nevertheless, once the step-size $\alpha$ is increased or decreased by 0.001, the recovery performance degrades a lot. Likewise, 
a sight change of the regularization parameter $\lambda$ would also lead to the large difference in NAMSEs. Therefore, it may cause prohibitively expensive costs to fine-tune the parameters that lead to satisfactory and robust performance of the algorithms.

%\subsection{Training Methods}\label{sec:training_method}
%The GNN architectures of LProx-DGD and LPG-EXTRA are trained for 500 epochs with a batch size of 32 samples, using the ADAM methods. The learning rate is set to be $10^{-4}$

\end{document}